\documentclass{article}

\usepackage{microtype}
\usepackage{graphicx}
\usepackage{subcaption}
\usepackage{booktabs} 
\usepackage{multirow}

\usepackage{hyperref}

\usepackage[preprint]{icml2026}

\usepackage{amsmath}
\usepackage{amssymb}
\usepackage{mathtools}
\usepackage{amsthm}
\usepackage{pifont}

\usepackage{bm}

\usepackage[capitalize,noabbrev]{cleveref}

\theoremstyle{plain}
\newtheorem{theorem}{Theorem}[section]

\newtheorem{assumption}[theorem]{Assumption}
\newtheorem{remark}[theorem]{Remark}

\usepackage[utf8]{inputenc}
\usepackage{xcolor}

\makeatletter

\expandafter\let\csname algorithm*\endcsname\relax
\expandafter\let\csname endalgorithm*\endcsname\relax

\makeatother

\usepackage[linesnumbered,ruled,vlined]{algorithm2e}

\definecolor{commentcolor}{RGB}{0, 128, 128}
\newcommand{\mycomment}[1]{\textcolor{commentcolor}{\# #1}}
\usepackage{colortbl}  
\usepackage{xcolor}    
\usepackage{arydshln}  

\definecolor{gray20}{gray}{0.90} 
\definecolor{gray10}{gray}{0.95} 
\definecolor{highlightcyan}{HTML}{D7F6FF} 

\newcolumntype{I}{!{\vrule width 1.2pt}}

\makeatletter
\newcommand{\thickhline}{%
    \noalign {\ifnum 0=`}\fi \hrule height 1.2pt
    \futurelet \reserved@a \@xhline
}
\makeatother

\usepackage[textsize=tiny]{todonotes}

\icmltitlerunning{Submission and Formatting Instructions for ICML 2026}

\begin{document}

\twocolumn[
  \icmltitle{UTOPIA: Unlearnable Tabular Data via Decoupled Shortcut Embedding}



  \icmlsetsymbol{equal}{*}

  \begin{icmlauthorlist}
    \icmlauthor{Jiaming He}{1}
    \icmlauthor{Fuming Luo}{2}
    \icmlauthor{Hongwei Li}{1}
    \icmlauthor{Wenbo Jiang}{1}
    \icmlauthor{Wenshu Fan}{1}
    \icmlauthor{Zhenbo Shi}{3}
    \icmlauthor{Xudong Jiang}{4}
    \icmlauthor{Yi Yu}{4}
  \end{icmlauthorlist}

  \icmlaffiliation{1}{University of Electronic Science and Technology of China}
  \icmlaffiliation{2}{Independent Researcher}
  \icmlaffiliation{3}{University of Science and Technology of China}
  \icmlaffiliation{4}{Nanyang Technological University}


  \icmlkeywords{Machine Learning, ICML}

  \vskip 0.3in
]



\printAffiliationsAndNotice{}  

\begin{abstract}

Unlearnable examples (UE) have emerged as a practical mechanism to prevent unauthorized model training on private vision data, while extending this protection to tabular data is non-trivial.
Tabular data in finance and healthcare is highly sensitive, yet existing UE methods transfer poorly because tabular features mix numerical and categorical constraints and exhibit saliency sparsity, with learning dominated by a few dimensions. Under a Spectral Dominance condition, we show certified unlearnability is feasible when the poison spectrum overwhelms the clean semantic spectrum. Guided by this, we propose \textbf{\underline{U}}nlearnable \textbf{\underline{T}}abular Data via Dec\textbf{\underline{O}}u\textbf{\underline{P}}led Shortcut Embedd\textbf{\underline{I}}ng (\textbf{UTOPIA}), which exploits feature redundancy to decouple optimization into two channels: high saliency features for semantic obfuscation and low saliency redundant features for embedding a hyper correlated shortcut, yielding constraint-aware dominant shortcuts while preserving tabular validity. Extensive experiments across tabular datasets and models show UTOPIA drives unauthorized training toward near random performance, outperforming strong UE baselines and transferring well across architectures.
\end{abstract}

\section{Introduction}
Tabular data, structured into rows of heterogeneous numerical and categorical features \cite{tabular_survey_2, tabular_survey_5}, stands as the predominant data modality for machine learning applications across critical domains, \textit{e.g.,} finance~\cite{sattarov2023findiff}, healthcare~\cite{hernandez2022synthetic}, and scientific research. Unlike unstructured data such as images and text, tabular data relies on heterogeneous structural constraints to ensure semantic validity, which integrates continuous numerical with discrete categorical features to capture complex data dependencies \cite{tabular_survey_1, tabular_survey_4}.

As deep learning models continue to scale, they increasingly rely on massive data harvested from the public web. As a countermeasure, unlearnable examples (UE)~\cite{unlearnable_data_survey_1} is introduced to embed imperceptible perturbations (as a ``shortcut") in private data that mislead unauthorized models into learning noise rather than true semantics. As the most common format for public records, tabular data is uniquely vulnerable to mass harvesting. Its compact structure allows adversaries to effortlessly scrape high-value assets from the open web, demonstrating an urgent need to defend tabular data from unauthorized training. As illustrated in Figure~\ref{fig:teaser}, the data owner releases a protected tabular dataset, ensuring that unauthorized model training yields poor performance.

\begin{figure}[t]
    \centering
    \includegraphics[width=1\linewidth]{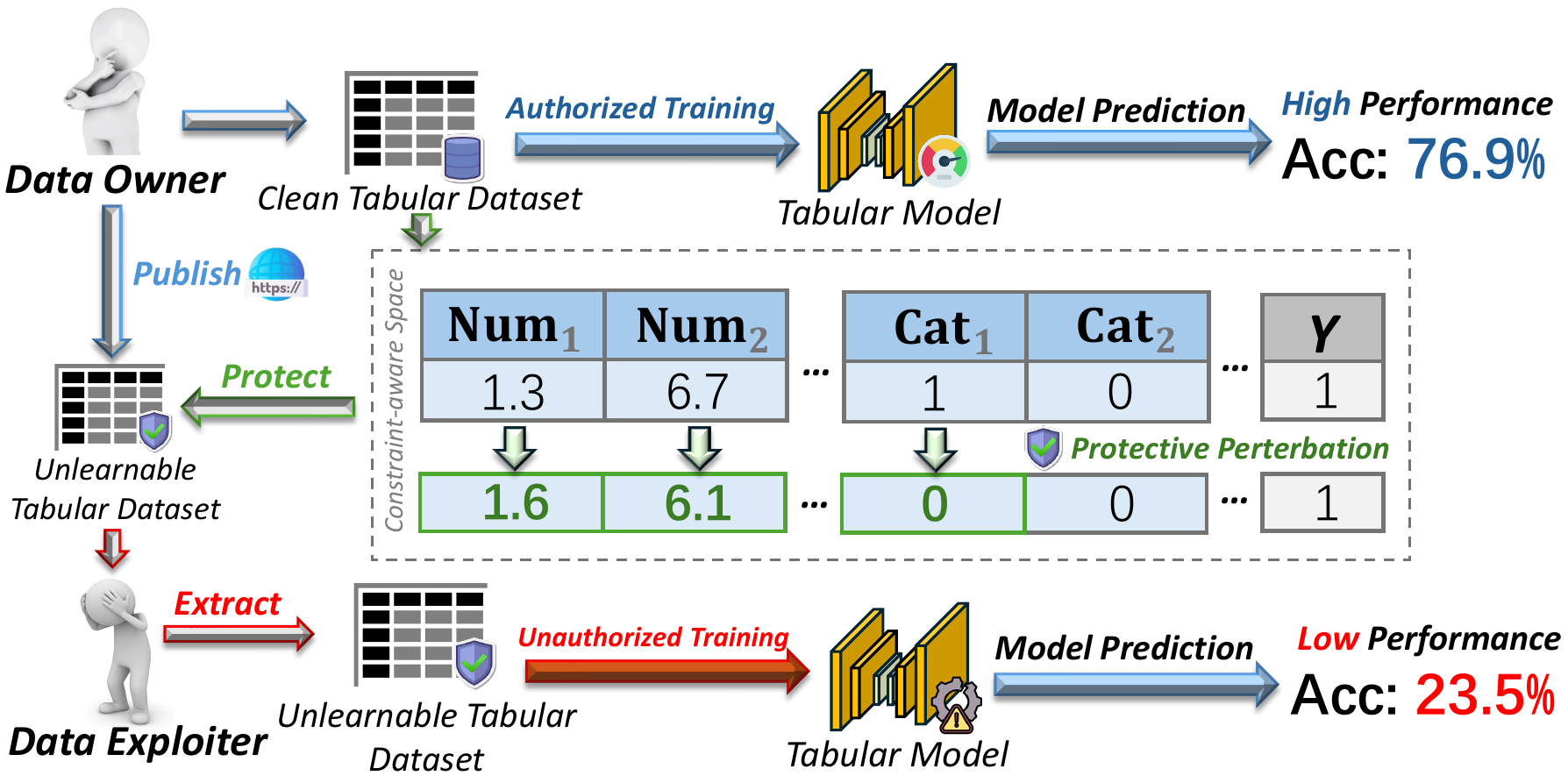}
    \vspace{-6mm}
    \caption{Illustration of the UTD that prevents unauthorized training on tabular data by injecting constraint-aware perturbations, transforming clean records into an unlearnable dataset that leads to poor generalization and performance on clean test data.}
    \label{fig:teaser}
    \vspace{-4mm}
\end{figure}

While existing UE methods~\cite{em,tap,rem,sep,wang2025provably} have successfully safeguarded image datasets by optimizing perturbations over \textit{continuous data distributions}, generating them for the tabular domain remains a formidable challenge.
Unlike homogeneous pixel arrays, a tabular sample $\mathbf{x}$ resides in a \textit{discrete and heterogeneous feature space}, comprising a mix of continuous numerical features $\mathbf{x}_{\text{num}} \in \mathcal{X}_{\text{num}}$ and discrete categorical features $\mathbf{x}_{\text{cat}} \in \mathcal{X}_{\text{cat}}$. 
Crucially, this discrete modality is governed by rigorous integrity constraints that fundamentally differ from the continuous manifolds assumed by existing methods. 
Specifically, any protective perturbations must ensure that the modified sample satisfies structural validity, formally denoted as $(\mathbf{x}+\boldsymbol{\delta}) \in \mathcal{C}_{\text{struc}}$. 
Thus, directly applying existing global-based perturbation to this discrete space is infeasible, where samples must satisfy structural validity $\mathcal{C}_{\text{struc}}$ to maintain data integrity.

In this work, we introduce the concept of Unlearnable Tabular Data (\textbf{UTD}). We first analyze the uniqueness of tabular data learning to reveal our motivations. We discover that tabular models typically exhibit \textit{data representation sparsity}, relying on a subset of high-influence features while leaving a vast subspace of redundant features dormant. This renders standard noise injection ineffective, as models easily filter out perturbations on non-salient dimensions. Our theoretical insight is to weaponize this \textit{intrinsic feature sparsity} to enforce a \textit{spectral dominance} condition ($\kappa \!=\! \lambda_p / \lambda_c \!\gg\! 1$), where the spectral norm of the injected poison mathematically overwhelms the clean semantic signal.

Building on these insights, we propose a unified UE method for tabular data, termed \textbf{\underline{U}}nlearnable \textbf{\underline{T}}abular Data via Dec\textbf{\underline{O}}u\textbf{\underline{P}}led Shortcut Embedd\textbf{\underline{I}}ng (\textbf{UTOPIA}) to empirically instantiate the UTD. Unlike image-based UE methods that inject external patterns, UTOPIA fundamentally alters the feature topology by decoupling the feature space into conflicting optimization trajectories. It suppresses semantic learnability on high-saliency features (minimizing $\lambda_c$) while simultaneously amplifying a synthesized shortcut on redundant ones (maximizing $\lambda_p$). Additionally, to accommodate the discrete nature of tabular data, we employ a constraint-aware perturbation strategy without violating data integrity. 

\textbf{Our contributions} can be summarized as below:

$\bullet$ To the best of our knowledge, we are the first to formally define the problem of UTD, identifying the unique challenges of manifold heterogeneity and saliency sparsity that render existing image-based UE ineffective.

$\bullet$ We derive a rigorous theoretical framework proving that creating a dominant spectral shortcut ($\kappa \gg 1$) is a necessary and sufficient condition for certified unlearnability.

$\bullet$ We propose UTOPIA, guided by the observation of data representation sparsity and the theoretical guarantee that spectral dominance ($\kappa \gg 1$) ensures unlearnability, which synthesizes robust shortcuts by decoupling features into conflicting optimization trajectories.

$\bullet$ Extensive experiments demonstrate that UTOPIA significantly outperforms state-of-the-art baselines, successfully reducing unauthorized model accuracy to random guessing level across diverse datasets and model architectures.

\section{Related Work}
\subsection{Data Poisoning and Unlearnable Examples}
To protect data from unauthorized model training, research has moved beyond traditional data poisoning. While poisoning attacks \cite{barreno2010security, goldblum2022dataset}, categorized as either integrity (\textit{e.g.,} backdoor attacks \cite{gu2017badnets,schwarzschild2021just}) or availability attacks \cite{biggio2012poisoning,xiao2015feature}, rely on injecting conspicuous, unrestricted samples \cite{koh2017understanding,zhao2022clpa,10.5555/3618408.3619357}, a more stealthy paradigm has emerged: unlearnable examples (UE).

UE aims to render an entire dataset non-exploitable by adding imperceptible perturbations (\textit{e.g.,} $\Vert \boldsymbol{\delta} \Vert_{\infty} \!\le\! \frac{8}{255}$) that preserve the original labels \cite{em,zhang2023unlearnable,DBLP:conf/icml/ZhuMDG24,lin2024safeguarding,meng2024semantic,yu2024purify,yu2024unlearnable,wu2025temporal}. The core mechanism is to induce models to learn these perturbation patterns instead of genuine semantic features, leading to near-random performance on clean test data. The evolution of UE methods began with error-minimizing noise (EM) \cite{em} and noise generation via neural tangent kernels (NTGA) \cite{ntga}. It progressed to enhance robustness against countermeasures, such as using targeted adversarial examples (TAP) \cite{tap} or crafting UE resistant to adversarial training (REM) \cite{rem}. Recent innovations focus on surrogate-free generation (LSP, AR) \cite{lsp,ar} and improving robustness to augmentations via techniques like one-pixel shortcuts (OPS) \cite{ops}. The field is also maturing theoretically, with works on Provably UE offering formal robustness certificates \cite{wang2025provably}.

UEs is now expanding beyond standard image classification. In 3D vision, \citet{DBLP:conf/icml/ZhuMDG24,wang2024unlearnable3d} explore UE for point clouds. For dense prediction tasks, universal generators like UnSeg \cite{NEURIPS2024_90672770} protect data against segmentation models. The paradigm has also been adapted to multi-task learning \cite{yu2025mtl}. However, a critical gap exists in applying UE to tabular data, a high-stakes domain in finance and healthcare. The unique structure of tabular data, with its mix of categorical and numerical features, demands fundamentally new perturbation strategies, making it a vital open research direction.

\subsection{Tabular Data Learning}
Tabular data, characterized by heterogeneous features and a lack of spatial correlation, was dominated by tree-based ensemble methods. Gradient Boosted Decision Trees (GBDTs), such as XGBoost \cite{chen2016xgboost} and CatBoost \cite{prokhorenkova2018catboost}, have long been considered the state-of-the-art due to their efficiency and robustness on irregular decision boundaries \cite{grinsztajn2022why}. However, recent years have witnessed a surge in deep learning for tabular data (TDL), driven by the need to integrate tabular inputs into differentiable pipelines. Advanced architectures like FT-Transformer \cite{gorishniy2021revisiting} have successfully adapted the Transformer architecture to tabular structures by tokenizing features, achieving performance competitive with GBDTs.

Beyond standard transformers, ExcelFormer \cite{ExcelFormer} utilizes an alternating attention mechanism and specialized data preprocessing to better capture complex feature interactions \cite{tablular_survey_7_Transformers}. To enhance model stability, TROMPT \cite{Trompt} integrates Ordinary Differential Equations (ODEs) with prompt-based learning to handle irregular patterns in tabular features. More recently, TabR \cite{TabR} introduced a retrieval-augmented approach that improves predictions by attending to similar examples from the training set.


\section{Preliminary and Problem Formulation}
\label{sec:problem_formulation}
In this section, we formally characterize the problem of generating unlearnable examples in tabular domains. We model the interaction between the data protector and the data exploiter (unauthorized model trainer) as a bi-level optimizations, constrained by the heterogeneous property of the feature space and tabular data structure integrity.

\textbf{Threat Model.}
We consider a threat model~\cite{em,wang2025provably,yu2025mtl} involving a \textit{data owner} and a \textit{data exploiter (adversary)}. The \textit{data owner} seeks to release a protected (unlearnable) tabular dataset to be posted public $\mathcal{D}_u = \{(\mathbf{x}_i \oplus \boldsymbol{\delta}_i, y_i)\}_{i=1}^N$ by applying an imperceptible, protective perturbation $\boldsymbol{\delta}_i$. The goal is to render $\mathcal{D}_u$ unexploitable such that any model $f_\theta: \mathcal{X} \to \mathcal{Y}$ trained on $\mathcal{D}_u$ fails to generalize to the clean data distribution $\mathcal{P}$. Conversely, the \textit{data exploiter} aims to train parameters $\theta$ on $\mathcal{D}_u$ that minimize the empirical risk $\mathcal{L}$, having no access to $\mathcal{D}_c$ but full control over the protected dataset $\mathcal{D}_u$ and the model learning process.

Let $\mathcal{D}_c = \{(\mathbf{x}_i, y_i)\}_{i=1}^{N}$ denote a clean source dataset drawn i.i.d. from an underlying distribution $\mathcal{P}$ over $\mathcal{X} \times \mathcal{Y}$, where $\mathcal{X}$ is the input feature space and $\mathcal{Y} = \{1, \dots, K\}$ is the label space. Unlike homogeneous domains (\textit{e.g.,} image), the tabular feature space $\mathcal{X}$ is a Cartesian product of heterogeneous subspaces: $\mathcal{X} = \mathcal{X}_{\text{num}} \times \mathcal{X}_{\text{cat}}$, where $\mathcal{X}_{\text{num}} \subseteq \mathbb{R}^{d_{\text{num}}}$ and $\mathcal{X}_{\text{cat}} \subseteq \mathbb{Z}^{d_{\text{cat}}}$ correspond to continuous numerical and discrete categorical features, respectively. Consequently, any sample $\mathbf{x} \in \mathcal{X}$ decomposes into $\mathbf{x} = [\mathbf{x}_{\text{num}}, \mathbf{x}_{\text{cat}}]$.

\textbf{Constraint-aware Feasible Space.}
A defining characteristic of tabular data is the presence of rigorous data integrity constraints. Blind application of gradient-based perturbations often yields samples that violate domain semantics. Following \citet{ben2024cafa}, we define the feasible perturbation set $\Omega(\mathbf{x})$ as the intersection of structural validity.
Let $\mathcal{C}_{\text{struc}} \subset \mathcal{X}$ denote the set of structurally valid samples, ensuring numerical values remain within bounded domains and categorical indices map to valid tokens. The feasible perturbation space for a given input $\mathbf{x}$ is defined as:
\begin{equation}
    \Omega(\mathbf{x}) = \left\{ \boldsymbol{\delta} \mid \|\boldsymbol{\delta}\|_p \leq \epsilon,~(\mathbf{x} \oplus \boldsymbol{\delta}) \in \mathcal{C}_{\text{struc}} \right\},
\end{equation}
where $\|\cdot\|_p$ denotes a norm suitable for mixed-type data (\textit{e.g.,} standardized $\ell_\infty$ for numerical and Hamming distance for categorical components) and $\epsilon$ is the budget.

\textbf{Optimization Objective.}
Formally, we frame the synthesis of unlearnable examples as a bi-level optimization game between the defender (data protector) and the adversary (data exploiter). The defender's objective is to craft a perturbation $\boldsymbol{\delta}$ within the feasible tabular space $\Omega(\mathbf{x})$ that maximizes the generalization error (risk) on the clean distribution $\mathcal{P}$, subject to the adversary's optimal response on the protected data. The goal of protective perturbation is formulated as:
\begin{equation}
\label{eq:bilevel_problem}
\begin{aligned}
    & \max_{\boldsymbol{\delta} \in \Omega(\mathbf{x})} \quad \mathbb{E}_{(\mathbf{x}, y) \sim \mathcal{P}} \left[ \mathcal{L}(f_{\theta^{*}(\boldsymbol{\delta})}(\mathbf{x}), y) \right] \\
    & \text{s.t.} \quad \theta^{*}(\boldsymbol{\delta}) \in \arg\min_{\theta} \sum_{(\mathbf{x}_i, y_i) \in \mathcal{D}_c} \mathcal{L}(f_\theta(\mathbf{x}_i \oplus \boldsymbol{\delta}_i), y_i).
\end{aligned}
\end{equation}
The adversary trains the model by minimizing the loss on $\mathcal{D}_u$, thereby learning the injected spurious shortcuts. The outer maximization ensures these shortcuts are statistically orthogonal to the true semantics, causing the learned parameters $\theta^{*}$ to fail to generalize to the clean domain $\mathcal{P}$.

\section{Methodology}
\subsection{Theoretical Analysis: Certified Unlearnability with Spectral Dominance}
\label{sec:theory}
In this section, we present a rigorous theoretical framework for our method. By leveraging an explicit characterization of the gradient dynamics in the dual space, we derive a closed-form upper bound on the certified learnability of tabular data under spectral manipulation.

\begin{assumption}[Orthogonal Subspace Decomposition.]
\label{assump}
Let tabular dataset $\mathcal{D}$ be the distribution over $\mathcal{X} \times \mathcal{Y}$, where data instance $\mathcal{X} \subseteq \mathbb{R}^d$ and respective labels $\mathcal{Y} = \{-1, 1\}$. We assume that the tabular data manifold admits a decomposition into orthogonal subspaces. For any $\mathbf{x} \sim \mathcal{D}$, we have $\mathbf{x} = \mathbf{x}_c + \mathbf{x}_p$, satisfying the orthogonality condition $\langle \mathbf{x}_c, \mathbf{x}_p \rangle = 0$ almost surely. We model the feature statistics via their spectral characteristics:
\begin{equation}
\mathbf{x}_c \sim \mathcal{N}(\boldsymbol{\mu}_{y}^c, \mathbf{\Sigma}_c), 
\quad 
\mathbf{x}_p \sim \mathcal{N}(\boldsymbol{\mu}_{y}^p, \mathbf{\Sigma}_p).
\end{equation}
Here, $\boldsymbol{\mu}_{y}^c$ and $\boldsymbol{\mu}_{y}^p$ are the class-conditional means for clean subspace $\mathbf{x}_c$ and poison subspace $\mathbf{x}_p$, respectively. Let $\lambda_c \coloneqq \|\mathbf{\Sigma}_c\|_2=\sigma_{max}(\mathbf{\Sigma}_c)$ and $\lambda_p \coloneqq \|\mathbf{\Sigma}_p\|_2=\sigma_{max}(\mathbf{\Sigma}_p)$ denote the spectral norms (feature strengths) of these subspaces. We define the \emph{Spectral Imbalance Ratio} as $\kappa \coloneqq \frac{\lambda_p}{\lambda_c}$.
We denote by $\mathcal{S}_c \subseteq \mathbb{R}^d$ the clean subspace corresponding to $\mathbf{x}_c$ (\textit{i.e.,} samples with $\mathbf{x}_p = 0$). The clean dataset $\mathcal{D}_{\text{clean}}$ is from the distribution of $\mathcal{D}$ given that $\mathbf{x} \in \mathcal{S}_c$.
\end{assumption}
%

\textbf{Analytical Perspective on Learning Dynamics.}
Building upon this orthogonal decomposition in Assumption~\ref{assump}, we analyze the learning dynamics of a ridge-regularized linear classifier \(f(\mathbf{x}) = \mathbf{w}^\top \mathbf{x}\) to precisely quantify how spectral dominance suppresses the clean features. The optimization objective is given by the empirical risk:
\begin{equation}
    \min_{\mathbf{w}} \mathcal{R}(\mathbf{w}) 
    \!\coloneqq\! 
    \sum_{i=1}^n \log\bigl(1 \!+\! \exp(-y_i \mathbf{w}^\top \mathbf{x}_i)\bigr) 
    \!+\! \frac{\gamma}{2} \|\mathbf{w}\|_2^2.
\end{equation}
Direct analysis of the primal variable $\mathbf{w}$ is intractable due to the coupling induced by the logistic loss. Following \citet{pezeshki2021gradient}, we analyze the dual variables $\boldsymbol{\alpha} \in \mathbb{R}^n$ via the Legendre transformation. The fixed point of the learning dynamics can be characterized analytically using the Lambert $W$ function, denoted as $W(\cdot)$, which is the inverse function of $f(z) = z e^z$. 
This analytical setup allows us to derive a closed-form upper bound on the norm of the clean weights, formalized as follows.

\begin{theorem}[Analytical Clean Weight Suppression]
\label{thm:analytical_suppression}
Under the orthogonality Assumption~\ref{assump}, the learning dynamics decouple along the clean and poison subspaces. Let $\mathbf{w}^* = \mathbf{w}_c^* + \mathbf{w}_p^*$ denote the equilibrium solution of the gradient dynamics. Then there exists a constant $\xi > 0$ (encoding label alignment) and a strictly increasing function $\psi: (0,\infty) \to (0,\infty)$ such that the equilibrium norm of the weights associated with the clean subspace satisfies
\begin{equation}
    \|\mathbf{w}_c^*\|_2 
    \;\le\; 
    \frac{\xi}{\kappa}\, W\!\bigl(\psi(\lambda_c)\bigr),
    \label{eq:lambert_bound}
\end{equation}
where $\kappa \!=\! {\lambda_p}/{\lambda_c}$ is the spectral imbalance ratio. For any fixed $\lambda_c$ and strength $\gamma$, the clean weights $\mathbf{w}_c^*$ vanish at least at the rate $\mathcal{O}(1/\kappa)$ as $\kappa \to \infty$. Proof is in Appendix \ref{prf:analytical_suppression}.
\end{theorem}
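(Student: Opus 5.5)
We follow the dual (Gradient Starvation) route of \citet{pezeshki2021gradient}. Write the ridge-regularized logistic risk through its Legendre dual. Using $\log(1+e^{-u}) = \max_{0\le\alpha\le 1}\bigl[-\alpha u - H(\alpha)\bigr]$, where $H(\alpha)=\alpha\log\alpha+(1-\alpha)\log(1-\alpha)$, and minimizing over $\mathbf{w}$ gives the primal--dual link
\begin{equation*}
\mathbf{w}^* = \tfrac{1}{\gamma}\sum_i \alpha_i y_i \mathbf{x}_i = \tfrac{1}{\gamma}\mathbf{X}^\top \mathbf{Y}\boldsymbol{\alpha}.
\end{equation*}
Under Assumption~\ref{assump}, $\mathbf{x}_i=\mathbf{x}_{c,i}+\mathbf{x}_{p,i}$ with orthogonal components. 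Projecting onto $\mathcal{S}_c$ and its complement therefore yields
\begin{equation*}
\mathbf{w}_c^*=\tfrac{1}{\gamma}\mathbf{X}_c^\top\mathbf{Y}\boldsymbol{\alpha}, \qquad \mathbf{w}_p^*=\tfrac{1}{\gamma}\mathbf{X}_p^\top\mathbf{Y}\boldsymbol{\alpha}.
\end{equation*}
The Gram matrix splits as $\mathbf{X}\mathbf{X}^\top=\mathbf{X}_c\mathbf{X}_c^\top+\mathbf{X}_p\mathbf{X}_p^\top$, which is the claimed decoupling. The dual stationarity condition then reads
\begin{equation*}
\log\frac{1-\alpha_i}{\alpha_i}=\tfrac{1}{\gamma}\,y_i\bigl[(\mathbf{X}_c\mathbf{X}_c^\top+\mathbf{X}_p\mathbf{X}_p^\top)\mathbf{Y}\boldsymbol{\alpha}\bigr]_i.
\end{equation*}

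Next we reduce this to a scalar fixed point. Following Pezeshki et al., we work in the eigenbasis of the Gram matrix (SVD $\mathbf{X}=\mathbf{U}\mathbf{S}\mathbf{V}^\top$). We linearize the sigmoid-type map $\alpha\mapsto\log\frac{1-\alpha}{\alpha}$ around its stationary value, or equivalently replace $H$ by its quadratic surrogate. Along each spectral direction $k$ with strength $s_k^2$, this gives a fixed-point equation of the form $z_k e^{z_k}=\psi_k$, whose solution is $z_k=W(\psi_k)$. The corresponding weight component is $w_k=\frac{s_k}{\gamma}(\text{projected }\boldsymbol{\alpha})$ and scales like $W(\cdot)/s_k$. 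The key structural point is that the dual variables $\alpha_i$ are \emph{shared} by both subspaces. Once the poison direction, with strength $\lambda_p$, already fits the margins, the residual $\boldsymbol{\alpha}$ is small; its size is controlled by $1/\lambda_p$ through the stationarity equation. Substituting into $\mathbf{w}_c^*=\frac{1}{\gamma}\mathbf{X}_c^\top\mathbf{Y}\boldsymbol{\alpha}$ and using $\|\mathbf{X}_c\|_2\lesssim\sqrt{n\lambda_c}$, we bound $\|\mathbf{w}_c^*\|_2\le \frac{\lambda_c}{\lambda_p}\cdot\xi\,W(\psi(\lambda_c))$. This is exactly \eqref{eq:lambert_bound} with $\kappa=\lambda_p/\lambda_c$. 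Here $\xi$ collects the label-alignment factor $\|\mathbf{U}^\top\mathbf{Y}\mathbf{1}\|$ and the dependence on $n$ and $\gamma$, and $\psi(\lambda_c)$ is the monotone argument, for example $\psi(\lambda)=\lambda\,c/\gamma$, which is strictly increasing.

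Finally, the $\mathcal{O}(1/\kappa)$ rate follows immediately: for fixed $\lambda_c$ and $\gamma$ the factor $\xi W(\psi(\lambda_c))$ is a constant, while $W$ is increasing and finite on $(0,\infty)$.

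\textbf{Main obstacle.} The hard step is the second one: obtaining a genuine inequality rather than an approximation when linearizing the dual equation. The plan is to use concavity of the entropy to bound $\log\frac{1-\alpha}{\alpha}$ on one side. This makes the exact $\boldsymbol{\alpha}$ dominated componentwise by the solution of the surrogate (Lambert) equation, since the map is monotone. Some care is also needed to pass from sample Gram matrices to the population spectra $\lambda_c,\lambda_p$. We would either state the result conditional on the empirical spectra or absorb the concentration constants into $\xi$.
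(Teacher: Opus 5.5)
\textbf{Gap: the step that makes $\boldsymbol{\alpha}$ scale like $1/\lambda_p$ fails, and the stated rate is false.} Your route is the paper's route: the dual link $\mathbf{w}^*=\gamma^{-1}\mathbf{X}^\top\mathbf{Y}\boldsymbol{\alpha}$, the block splitting of the Gram matrix, and competition only through the shared dual variables. The paper's appendix does not derive the decisive step either. It asserts an attenuation factor $\xi/(1+\lambda_p/\lambda_c+\gamma/\lambda_c)$ in front of some increasing $g(\lambda_c)$, then renames $g=W\circ\psi$, which is always possible with $\psi=g\,e^{g}$.

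Your version breaks at the claim that the shared residual $\boldsymbol{\alpha}$ is ``controlled by $1/\lambda_p$ through the stationarity equation.'' Solving that equation in the regime you need gives something else. Take a one-dimensional poison coordinate $x_{p,i}=a\,y_i$, so the poison block of $\mathbb{E}[\mathbf{x}\mathbf{x}^\top]$ is $a^2$ (this is how the paper's proof reads $\lambda_p$). Keep the clean data fixed with $\mathbf{g}:=\sum_i y_i\mathbf{x}_{c,i}\neq\mathbf{0}$. The competitor $w_c=0$, $w_p=a^{-1}\log a^2$ shows the optimal value tends to $0$. Hence $\mathbf{w}^*\to\mathbf{0}$, the clean margins vanish, and $\alpha_i=e^{-u}(1+o(1))$ with $u=a w_p^*\to\infty$. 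The poison stationarity condition $\gamma u=a^2\sum_i\alpha_i$ becomes $u e^{u}=\frac{na^2}{\gamma}(1+o(1))$, so
\[
\alpha_i=\frac{\gamma\,W(na^2/\gamma)}{na^2}\,\bigl(1+o(1)\bigr),\qquad
\|\mathbf{w}_c^*\|_2=\frac{W(na^2/\gamma)}{na^2}\,\|\mathbf{g}\|_2\,\bigl(1+o(1)\bigr).
\]
The Lambert function that your own fixed-point analysis produces therefore has the \emph{poison} strength in its argument, and the residual decays like $\log\lambda_p/\lambda_p$. Consequently $\kappa\|\mathbf{w}_c^*\|_2\to\infty$. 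No $\xi$ or $\psi$ independent of $\lambda_p$ can satisfy the bound, and the $\mathcal{O}(1/\kappa)$ rate is false. Writing this factor as $W(\psi(\lambda_c))$ is exactly where your argument, and the paper's asserted factor, breaks.

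The example also exposes an inconsistency in your step 2. Replacing $H$ by its quadratic surrogate at $\alpha=\tfrac12$ makes stationarity linear: $\boldsymbol{\alpha}=2(4\mathbf{I}+\gamma^{-1}\mathbf{Y}\mathbf{K}\mathbf{Y})^{-1}\mathbf{1}$. That resolvent response is what produces a clean $1/\kappa$ factor like the paper's. However, it yields no equation $ze^{z}=\psi$, and it is accurate only near zero margin, which is the opposite of the $\kappa\to\infty$ regime. The Lambert equation comes from keeping the $\log(1/\alpha)$ tail, and that tail is exactly the source of the extra logarithm. So the one-sided entropy bound and comparison argument in your last paragraph cannot rescue the rate; the obstruction is the true rate, not linearization error.

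``The poison already fits the margins'' is also an unstated hypothesis. A large $\lambda_p$ alone says nothing about label alignment: label-independent, high-variance poison in one dimension generically leaves $\mathbf{w}_c^*$ bounded away from $\mathbf{0}$.

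Under an explicit margin hypothesis, your route becomes rigorous for the corrected rate. Assume a unit $\mathbf{u}\in\mathcal{S}_p$ with $y_i\mathbf{u}^\top\mathbf{x}_{p,i}\ge m$ for all $i$, and set $s=W(nm^2/\gamma)$.
\begin{itemize}
\item Comparing the objective with the competitor $(s/m)\,\mathbf{u}$ gives $\|\mathbf{w}^*\|_2\le(s+1)/m$.
\item Projecting the stationarity condition onto $\mathbf{u}$ gives $m\sum_i\alpha_i\le\gamma\|\mathbf{w}^*\|_2$.
\item Hence $\|\mathbf{w}_c^*\|_2\le\gamma^{-1}\max_i\|\mathbf{x}_{c,i}\|_2\sum_i\alpha_i\le(s+1)\max_i\|\mathbf{x}_{c,i}\|_2/m^2$.
\end{itemize}
With $m^2\asymp\lambda_p$ this is $\mathcal{O}(\log\kappa/\kappa)$. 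The example shows it is sharp, and it still suffices for the $\Phi(0)=\tfrac12$ limit in Theorem~\ref{thm:certified_bound}.
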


\textbf{Certified Unlearnability Theorem.}
Building on Theorem~\ref{thm:analytical_suppression}, we derive a probabilistic certificate that upper bounds the classifier’s accuracy under parameter space perturbations, which in turn yields a certified lower bound on unlearnability.
 We consider a randomized linear classifier $\tilde{f}(\mathbf{x})=\operatorname{sign}((\mathbf{w}^*+\boldsymbol{\delta})^\top \mathbf{x})$ with $\boldsymbol{\delta}\sim\mathcal{N}(\mathbf{0},\sigma^2\mathbf{I})$, where $\mathbf{w}^*$ is the learned weight vector and $\sigma$ controls the magnitude of local weight uncertainty that can arise from training stochasticity. Our goal is to certify, for a given $\sigma$, $\mathbb{P}(\tilde{f}(\mathbf{x})=y)$ by bounding the probability that the signed margin $(\mathbf{w}^*+\boldsymbol{\delta})^\top \mathbf{x}$ preserves its label consistent sign. This connects weight suppression to an explicit accuracy guarantee.

\begin{theorem}[Certified Accuracy Bound via Lambert Dynamics]
\label{thm:certified_bound}
Given a tabular dataset with spectral imbalance ratio $\kappa$, the expected accuracy of the smoothed classifier on the clean distribution $\mathcal{D}_{\text{clean}}$ is upper-bounded by a function of the Lambert $W$ solution:
\begin{equation}
    \underset{(\mathbf{x}, y) \sim \mathcal{D}_{\text{clean}}}{\mathbb{E}} 
    \!\left[ \mathbb{P}(\tilde{f}(\mathbf{x}) = y) \right] 
    \!\le\! 
    \Phi\Big( 
        \tfrac{\mathcal{C}}{\sigma} \cdot 
        \underbrace{\tfrac{W(\psi(\lambda_c))}{\kappa}}_{\text{Suppression Term}} 
    \Big),
\end{equation}
where $\Phi(\cdot)$ is the Gaussian CDF, $\psi(\cdot)$ is the strictly increasing function from Theorem~\ref{thm:analytical_suppression}, and $\mathcal{C} > 0$ is a data-dependent constant. Proof is in Appendix \ref{prf:certified_bound}.
\end{theorem}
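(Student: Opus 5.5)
The proof reduces the certified accuracy to a Gaussian tail computation in the weight perturbation $\boldsymbol{\delta}$, and then inserts the suppression bound of Theorem~\ref{thm:analytical_suppression}.

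\textbf{Conditional margin law.} Fix a clean test pair $(\mathbf{x},y)\sim\mathcal{D}_{\text{clean}}$, so that $\mathbf{x}=\mathbf{x}_c\in\mathcal{S}_c$ and $\mathbf{x}_p=0$. By the orthogonality in Assumption~\ref{assump}, $\mathbf{w}_p^{*\top}\mathbf{x}=0$, so the signed margin is
\[
y(\mathbf{w}^*+\boldsymbol{\delta})^\top\mathbf{x}=y\,\mathbf{w}_c^{*\top}\mathbf{x}_c+y\,\boldsymbol{\delta}^\top\mathbf{x}_c .
\]
Conditionally on $\mathbf{x}$, the second term is distributed as $\mathcal{N}(0,\sigma^2\|\mathbf{x}_c\|_2^2)$. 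Hence
\[
\mathbb{P}(\tilde f(\mathbf{x})=y)=\Phi\!\Big(\frac{y\,\mathbf{w}_c^{*\top}\mathbf{x}_c}{\sigma\|\mathbf{x}_c\|_2}\Big)\le\Phi\!\Big(\frac{\|\mathbf{w}_c^*\|_2}{\sigma}\Big).
\]
The inequality is Cauchy--Schwarz, $y\,\mathbf{w}_c^{*\top}\mathbf{x}_c\le\|\mathbf{w}_c^*\|_2\|\mathbf{x}_c\|_2$, followed by monotonicity of $\Phi$. This bound holds pointwise, and $\mathbf{x}_c=0$ is a null event under the Gaussian model, so the ratio is well defined almost surely. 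Taking expectation over $\mathcal{D}_{\text{clean}}$ preserves it.

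\textbf{Inserting the suppression bound.} Theorem~\ref{thm:analytical_suppression} gives $\|\mathbf{w}_c^*\|_2\le\frac{\xi}{\kappa}W(\psi(\lambda_c))$. Since $\Phi$ is increasing, this yields the claim with $\mathcal{C}=\xi$.

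\textbf{Sharpening the constant.} To get a genuinely data-dependent constant, I would avoid Cauchy--Schwarz and keep the normalized margin. Write
\[
u=\frac{\mathbf{w}_c^*}{\|\mathbf{w}_c^*\|_2},\qquad \rho(\mathbf{x})=\frac{y\,u^\top\mathbf{x}_c}{\|\mathbf{x}_c\|_2}\in[-1,1].
\]
Then the per-sample accuracy is $\Phi(\rho(\mathbf{x})\|\mathbf{w}_c^*\|_2/\sigma)$. Setting $\mathcal{C}=\xi\cdot\operatorname{ess\,sup}\rho\le\xi$ gives the bound in the statement. One could instead apply Jensen after noting that $\Phi$ is concave on $[0,\infty)$; this would only give a constant tied to $\mathbb{E}[\rho^+]$, and I would present it as a remark.

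\textbf{Main obstacle.} The delicate step is a subtle dependency issue: $\mathbf{w}^*$ is learned from the poisoned training set, while the bound is evaluated on independent clean data. I would treat $\mathbf{w}^*$ as fixed, since it is the deterministic equilibrium of Theorem~\ref{thm:analytical_suppression}, and condition on it. This is legitimate because the test sample is independent of the training sample, and Theorem~\ref{thm:analytical_suppression} bounds $\|\mathbf{w}_c^*\|_2$ uniformly. The remaining care is only that $\mathbf{x}_c\neq 0$ almost surely, so the normalization is valid, and that the final bound tends to $\Phi(0)=1/2$, i.e.\ random guessing, at rate $\mathcal{O}(1/\kappa)$.
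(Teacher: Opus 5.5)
Your proposal is correct and follows the paper's proof essentially line for line: the exact Gaussian law of the margin $y(\mathbf{w}_c^*+\boldsymbol{\delta})^\top\mathbf{x}$ on a clean sample (using $\mathbf{w}_p^{*\top}\mathbf{x}=0$), then Cauchy--Schwarz and monotonicity of $\Phi$ to reach $\Phi(\|\mathbf{w}_c^*\|_2/\sigma)$, averaging over $\mathcal{D}_{\text{clean}}$, and substituting Theorem~\ref{thm:analytical_suppression} with $\mathcal{C}=\xi$. Your optional sharpening via $\operatorname{ess\,sup}\rho$ buys nothing when $\mathbf{x}_c$ is a nondegenerate Gaussian, since that supremum equals $1$ and returns $\mathcal{C}=\xi$; the Jensen variant you relegated to a remark is the one that actually shrinks the constant, to $\xi\,\mathbb{E}[\rho^+]$.
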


\begin{remark}
\textit{The introduction of the Lambert $W$ function in Theorem~\ref{thm:analytical_suppression} provides a compact parametrization of the non-linear dependence of the equilibrium weights on the clean spectrum through the response function $\mathcal{A}(\lambda,\gamma)$. The key suppression factor is the explicit $1/\kappa$ scaling established therein. Combined with the probabilistic certification framework of Theorem~\ref{thm:certified_bound}, this shows that spectral dominance ($\kappa\!=\! {\lambda_p}/{\lambda_c} \!\gg\! 1$) is a sufficient condition for certified unlearnability in tabular manifolds: as the poison spectrum overwhelms the clean spectrum, the effective clean margin collapses, and the certified accuracy cannot exceed $0.5$.
}
\end{remark}

\subsection{Unlearnable Tabular Data via Decoupled Shortcut Embedding (UTOPIA)}
\label{sec:methodology}

\textbf{\textit{What hinders current UE in the tabular domain?}}
The primary barrier stems from the necessity of performing optimization within a rigorous \textit{discrete space}. Unlike continuous image manifolds, tabular data resides in a heterogeneous space where blindly applying global perturbations inevitably violates structural integrity constraints. Furthermore, we observe a critical phenomenon of \textit{data representation sparsity}: tabular models typically rely on a distinct subset of high-influence features, rendering uniform noise injection ineffective as models easily filter out perturbations on redundant features, and we detailed this with experimental analysis in Appendix \ref{appendix:Feature_Dependency}. Therefore, we raise the question: 

\textit{Can we transcend the limitations of continuous optimization to rigorously perturb heterogeneous tabular manifolds, leveraging intrinsic feature sparsity to synthesize certified spectral shortcuts?}

Building on our observation of \textit{data representation sparsity} of tabular data and the theoretical guaranty that spectral dominance ($\kappa= \lambda_p/{\lambda_c} \gg 1$) ensures unlearnability, we introduce \textbf{Unlearnable Tabular Data via Decoupled Shortcut Embedding (UTOPIA)}, a practical framework designed to empirically achieve this condition by synthesizing protective perturbations that maximize $\kappa$ through constrained optimization. Unlike prior works that inject external fixed patterns, UTOPIA exploits the \textit{intrinsic feature sparsity} unique to tabular data. Our core insight is to synthesize a \textit{dominant spectral confounder} by decoupling the tabular features into a dominant group and a recessive group, while assigning divergent optimization trajectories. By steering these naturally entangled features in opposing directions, we compel the model to converge onto a simplified latent structure that dominates the optimization landscape while remaining statistically disjoint from the true semantic.

\textbf{Influence-Guided Subspace Decoupling.}
\label{sec:feature_decoupling}
In heterogeneous tabular domains, preserving the manifold structure is paramount. To identify the optimal substrate for embedding spurious correlations within the numerical subspace $\mathbf{x}_{\text{num}}$, we employ an \textit{Influence-Guided Decoupling} strategy. Let $\mathbf{R} \in \mathbb{R}^{d \times d}$ denote the feature correlation matrix and $\mathcal{P} = \{(i, j) \mid i < j, |\mathbf{R}_{ij}| > \tau\}$ represent the set of redundant feature pairs. We formalize the subspace decoupling as a unified operator that first quantifies the expected \textit{gradient attribution}, and subsequently partitions the redundant pairs into orthogonal index sets $\mathcal{I}_{\Phi}$ (Predictive Dominant) and $\mathcal{I}_{\Psi}$ (Predictive Recessive):
\begin{equation}
\label{eq:decoupling_mechanism}
\begin{aligned}
    \mathcal{I}_{\Phi} &\!=\! \bigcup_{(i, j) \in \mathcal{P}} \!\!\left\{ \operatorname*{arg\,max}_{k \in \{i, j\}}  \mathbb{E}_{(\mathbf{x}, y) \sim \mathcal{D}} \!\!\left[ \left| \frac{\partial \mathcal{L}(f(\mathbf{x}), y)}{\partial x_k} \right| \right] \right\}, \\
    \mathcal{I}_{\Psi} &\!=\! \bigcup_{(i, j) \in \mathcal{P}} \!\!\left\{ \operatorname*{arg\,min}_{k \in \{i, j\}}  \mathbb{E}_{(\mathbf{x}, y) \sim \mathcal{D}} \!\!\left[ \left| \frac{\partial \mathcal{L}(f(\mathbf{x}), y)}{\partial x_k} \right| \right] \right\}.
\end{aligned}
\end{equation}
This competitive assignment mechanism ensures that for every correlated pair, the feature exhibiting robust predictive utility (maximal \textit{influence}) is assigned to $\mathcal{I}_{\Phi}$ for adversarial obfuscation, while its redundant counterpart (minimal \textit{influence}) is allocated to $\mathcal{I}_{\Psi}$ to serve as the carrier for the injected shortcut. We construct binary modulation masks $\mathbf{m}_{\Phi}, \mathbf{m}_{\Psi} \in \{0, 1\}^d$ via indicator functions on these sets. By construction, this guarantees structural orthogonality $\mathbf{m}_{\Phi} \odot \mathbf{m}_{\Psi} = \mathbf{0}$, effectively splitting the data manifold into conflicting control channels.

\textbf{Optimization Objective.}
To induce unlearnability, we formulate a \textit{Differential Gradient Steering} objective. The goal is to maximize the divergence in the obfuscation subspace while simultaneously converging the target spurious subspace towards a low-loss trajectory. Crucially, because these subspaces are naturally correlated, forcing them to diverge creates a highly correlated artifact, the \textit{Global Confounding Feature}, that artificially inflates the poison spectral norm $\lambda_p$. For a batch $\mathcal{B} = \{(\mathbf{x}_i, y_i)\}_{i=1}^N$, the data owner synthesizes the heterogeneous perturbation $\boldsymbol{\delta} = [\boldsymbol{\delta}_{\text{num}}, \boldsymbol{\delta}_{\text{cat}}]$ by maximizing the following surrogate objective:
\begin{equation}
\label{eq:UTOPIA_objective}
\begin{aligned}
\!\!\!\!\max_{\boldsymbol{\delta} \in \Omega(\mathbf{x})} \mathcal{J}_{\text{DSE}}(\boldsymbol{\delta}) \coloneqq &\frac{1}{N} \sum_{i=1}^N \Big[ 
 \underbrace{ \mathcal{L}\left( f(\mathbf{x}_i \oplus \boldsymbol{\delta}_i), y_i; \mathbf{m}_{\Phi} \right) }_{\text{Semantic Suppression (Ascent)}} \\
& - \lambda \cdot 
\underbrace{ \mathcal{L}\left( f(\mathbf{x}_i \oplus \boldsymbol{\delta}_i), y_i; \mathbf{m}_{\Psi} \right) }_{\text{Shortcut Injection (Descent)}} 
\Big],
\end{aligned}
\end{equation}
where $\mathcal{L}(\cdot; \mathbf{m})$ denotes the masked loss computed solely using gradients from the active mask $\mathbf{m}$, and $\lambda$ serves as the amplification factor. The operator $\oplus$ denotes type-specific perturbation: additive applied to numerical values and substitution applied to categorical tokens. 
During optimization, the surrogate model $f$ is fixed and initialized as the model pre-trained on the clean dataset.

\begin{algorithm}[t]\small
    \caption{Optimization of Our UTOPIA}
    \label{alg:UTOPIA}
    
    \LinesNotNumbered 
    \SetKwInOut{Input}{Input}
    \SetKwInOut{Output}{Output}
    \DontPrintSemicolon
    
    \Input{Dataset $\mathcal{D}_c = \{(\mathbf{x}_i, y_i)\}_{i=1}^N$, Surrogate model $f_{\theta}$, Correlation threshold $\tau$, Perturbation budget $\epsilon$, Iterations $T$, Amplification factor $\lambda$, Step size $\eta$.}
    \Output{Unlearnable dataset $\mathcal{D}_u = \{(\mathbf{x}_i \oplus \boldsymbol{\delta}_i, y_i)\}_{i=1}^N$}
    
    Train surrogate model $f_{\theta}$ on clean dataset $\mathcal{D}_c$ to converge\;
    
    \mycomment{Phase 1: Influence-guided subspace decoupling} \;
    
    Initialize matrix $R$ and pairs $\mathcal{P} \leftarrow \{(i,j) \mid |R_{ij}| > \tau\}$\;
    
    Calculate attribution scores $\mathcal{I}_k \leftarrow \mathbb{E}_{(x,y)\sim\mathcal{D}}[|\frac{\partial \mathcal{L}}{\partial x_k}|]$\;
    
    \For{pair $(i, j) \in \mathcal{P}$}{
        Assign dominant feature to $\mathcal{I}_{\Phi}: k_{\Phi} \!\leftarrow\! \underset{k \in \{i,j\}}{\arg\max}\; \mathcal{I}_k$\;
        Assign recessive feature to $\mathcal{I}_{\Psi}$: $k_{\Psi} \leftarrow \underset{k \in \{i,j\}}{\arg\min}\; \mathcal{I}_k$\;
    }
    
    Construct modulation masks $\mathbf{m}_{\Phi}$ and $\mathbf{m}_{\Psi}$ based on $\mathcal{I}_{\Phi}, \mathcal{I}_{\Psi}$\;
    
    \mycomment{Phase 2: Constraint-aware shortcut injection} \;
    
    \For{batch $(\mathbf{x}_i, y_i) \in \mathcal{D}_c$}{
        Initialize perturbations $\boldsymbol{\delta}_i \leftarrow \mathbf{0}$\;
        \For{iter $\leftarrow 1$ \KwTo $T$}{
            \mycomment{Compute gradient steering objective via Eq.~\ref{eq:UTOPIA_objective}} \;
            
            $\mathcal{L}_{\Phi} \!=\! \mathcal{L}\!\left(f(\mathbf{x}_i \!\oplus\! \boldsymbol{\delta}_i), y_i; \mathbf{m}_{\Phi}\right)$ \tcp*{Obfuscation}
            $~~\mathcal{L}_{\Psi} \!=\! \mathcal{L}\!\left(f(\mathbf{x}_i \oplus \boldsymbol{\delta}_i), y_i; \mathbf{m}_{\Psi}\right)$ \tcp*{Injection}
            
            $~~\mathcal{J}_{DSE} \leftarrow \mathcal{L}_{\Phi} - \lambda \cdot \mathcal{L}_{\Psi}$\;
            
            $~~\boldsymbol{\delta}_i \leftarrow \Pi_{\Omega(\mathbf{x}_i)}(\boldsymbol{\delta}_i + \eta \cdot \operatorname{sign}\!\left(\nabla_{\boldsymbol{\delta}_i}\, \mathcal{J}_{DSE}\right))$ \tcp*{Update}
        }
    }
    \Return $\mathcal{D}_u \leftarrow \{(\mathbf{x}_i \oplus \boldsymbol{\delta}_i, y_i)\}_{i=1}^N$
\end{algorithm}

We employ this \textbf{Counter-Directional Optimization} to fundamentally alter the spectral properties of the manifold: the first term performs gradient ascent on high-influence features to disrupt robust semantic dependencies, while the second term executes gradient descent on low-influence redundancies to inject a strong linear correlation. To strictly enforce domain-specific constraints within the heterogeneous optimization loop, we solve Eq.~\ref{eq:UTOPIA_objective} via a hybrid Projected Gradient Descent (PGD)~\cite{pgd}. At each iteration $t$, the perturbation is updated as:
\begin{equation}
\label{eq:pgd_update}
\!\!\boldsymbol{\delta}^{(t+1)} \!=\! \Pi_{\Omega(\mathbf{x})} \left[ \boldsymbol{\delta}^{(t)} \!+\! \eta \cdot \operatorname{sign} \left( \nabla_{\boldsymbol{\delta}} \mathcal{J}_{\text{DSE}}(\boldsymbol{\delta}^{(t)}) \right) \right],
\end{equation}
where $\eta$ is the step size and $\Pi_{\Omega(\mathbf{x})}$ is as a composite projection operator to respect the mixed-type manifold structure. Specifically, $\Pi_{\Omega(\mathbf{x})}$ constrains numerical perturbations to the intersection of the $\ell_{\infty}$-ball and intrinsic feature domains, while projecting categorical updates with greedy accumulated gradient strategy by following TabPGD~\cite{ben2024cafa}. Consequently, the synthesized artifact manifests as an optimal predictor satisfying $\lambda_p \gg \lambda_c$, compelling the unauthorized model to collapse onto this global confounder. The detailed optimization of UTOPIA is outlined in Alg.~\ref{alg:UTOPIA}.

\begin{figure}[t]
    \begin{minipage}[t]{0.48\linewidth}
        \centering
        \includegraphics[width=\textwidth]{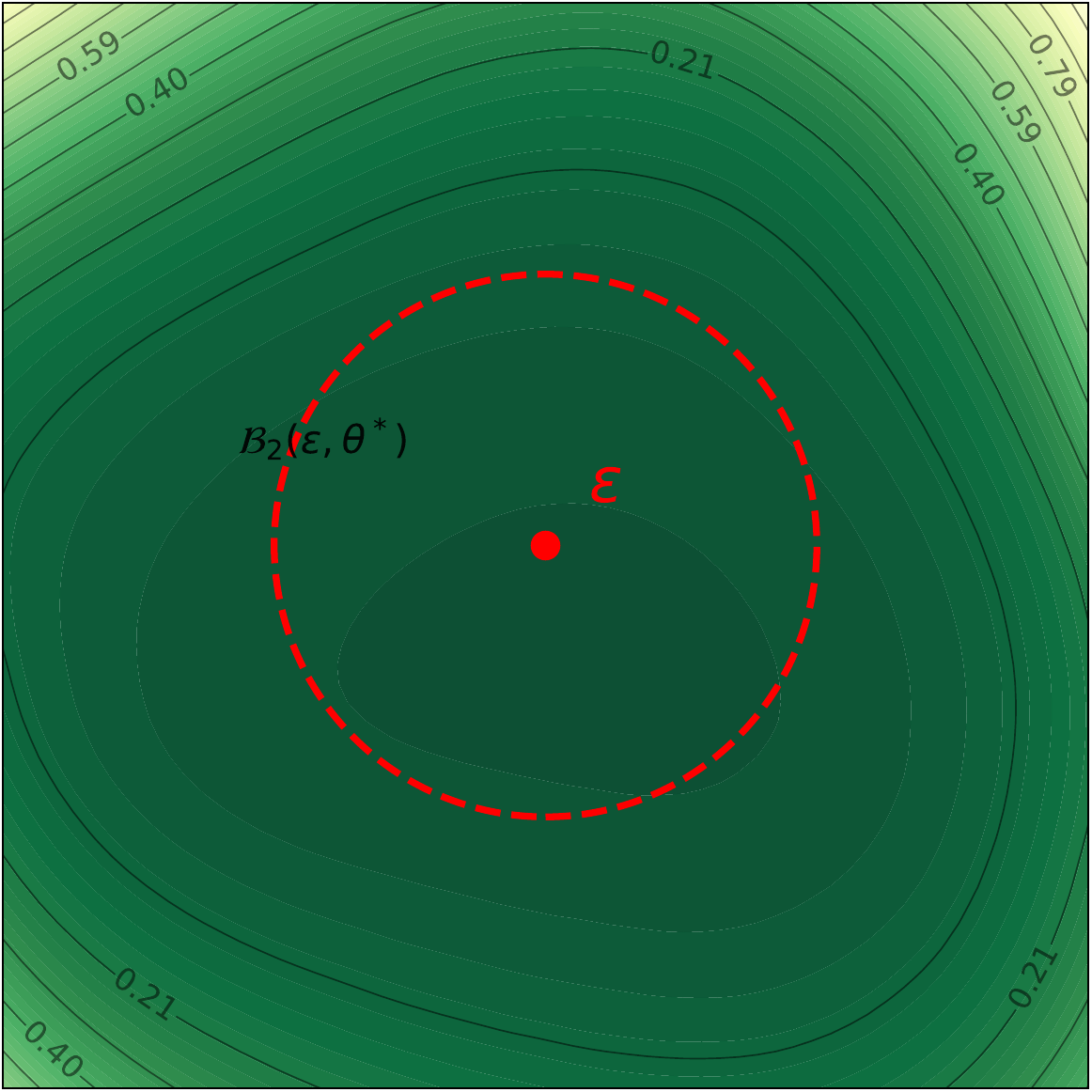}
        \centerline{\footnotesize(a) EM~\citep{em}}
    \end{minipage}%
    \hfill
    \begin{minipage}[t]{0.48\linewidth}
        \centering
        \includegraphics[width=\textwidth]{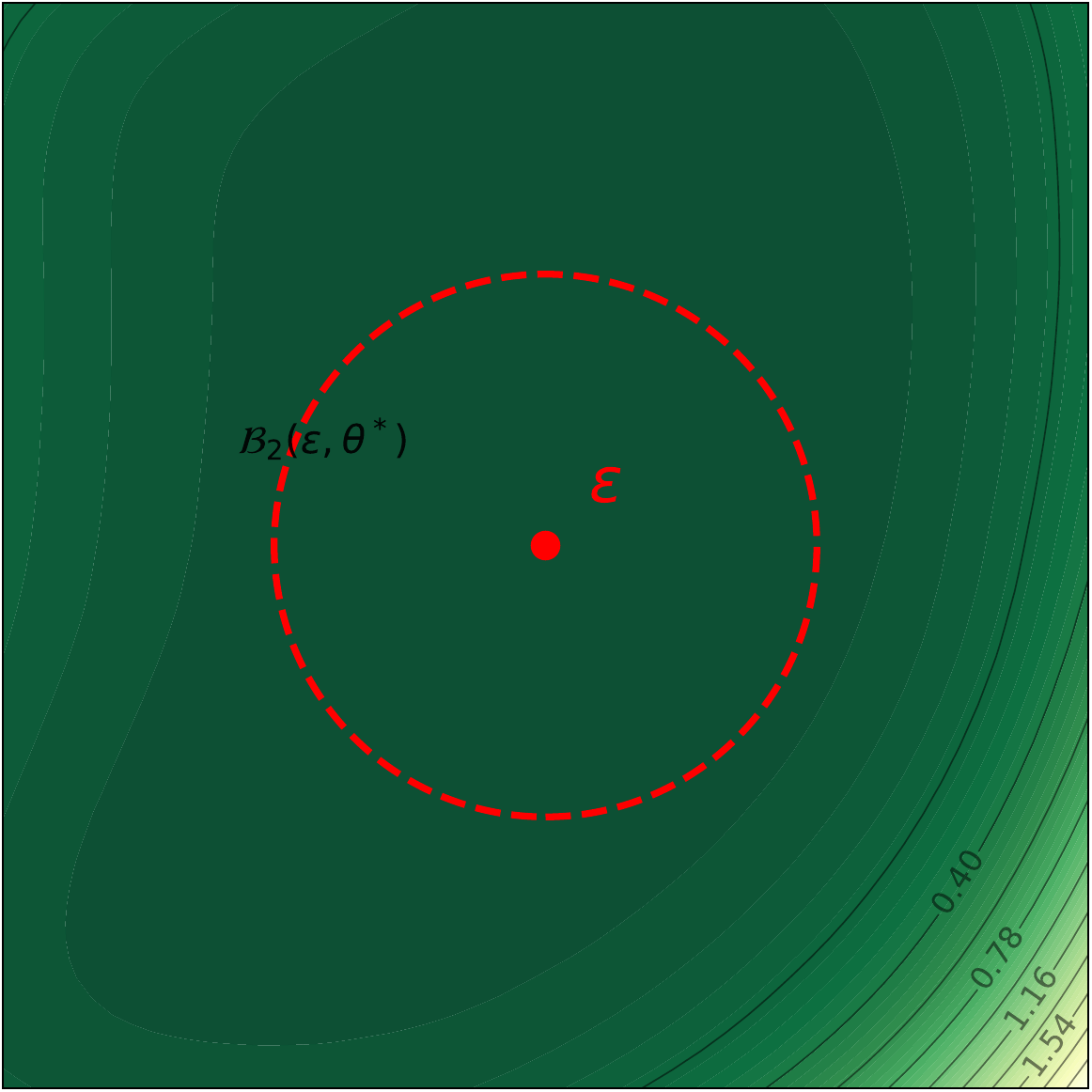}
        \centerline{\footnotesize(b) Our UTOPIA}
    \end{minipage}
    \vspace{-2mm}
    \caption{Comparison of convergence flatness, \textit{i.e.,} $\mathcal{L}_{\theta}$ Landscape.}
    \vspace{-3mm}
    \label{fig:flatness}
\end{figure}

\subsection{UTOPIA Can Induce Robust Basins}

\textbf{Analysis of Convergence Flatness.}
To empirically validate the impact of our UTOPIA on perturbation optimization dynamics, we visualize the loss landscape geometry around the converged model parameters, which enables us to qualitatively assess the local curvature and $\epsilon$-sharpness. We plot the loss surface variations along random direction vectors within an $\epsilon$-ball neighborhood $\mathcal{B}_2(\epsilon, \theta^*)$ centered at the optimal loss center, as shown in Fig.~\ref{fig:flatness}.
We find that the baseline EM~\citep{em} continuous optimization on discretized tabular data converges to a sharply peaked loss landscape, suggesting a minimum that is highly sensitive to parameter perturbations. By contrast, training with UTOPIA-perturbed data produces markedly flatter minima with more widely spaced loss contours. 
This geometric smoothing suggests that UTOPIA reshapes the optimization trajectory by replacing complex semantic dependencies with a simplified linear correlation (the global confounder), which forms a broad basin that steers convergence toward a robust spurious features rather than the true data distribution.

\begin{figure}[t]
    \centering
    \includegraphics[width=1\linewidth]{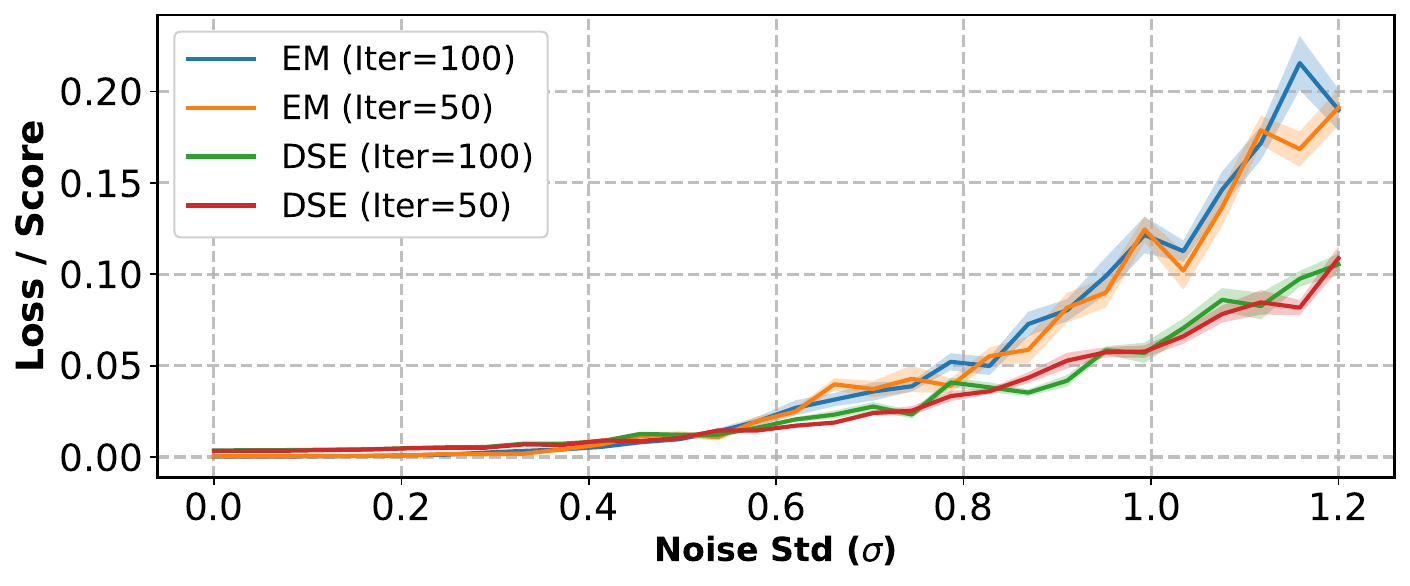}
    \vspace{-7mm}
    \caption{Comparison of noisy robustness.}
    \vspace{-4mm}
    \label{fig:noise_robustness}
\end{figure}

\textbf{Analysis of Noisy Robustness.}
Given the proven duality between parameter and input robustness~\citep{wu2020adversarial, zhang2024duality}, the observed convergence flatness implies high resistance to input perturbations. To validate this, we evaluate the model's stability against stochastic input noise ($\sigma$) in Fig.~\ref{fig:noise_robustness}. The baseline EM~\citep{em}  proves relatively unstable, exhibiting sharp loss spikes as noise increases. In contrast, UTOPIA maintains consistently lower training loss even under significant injection ($\sigma > 0.8$). This confirms that UTOPIA synthesizes robust, global confounders that dominate learning dynamics.

\section{Experiments}


\begin{table*}[t]\small
\centering
\captionsetup{font=small}
\caption{\textbf{Results on Binary Classification and Multi-classification Datasets:} We report average accuracy (\%) to compare with state-of-the-art UE baselines on Tabnet, FTT, Saint abd MLP. Our method is highlighted in the bottom row.}
\vspace{-2mm}
\scriptsize{
\resizebox{\linewidth}{!}{
\setlength\tabcolsep{3pt} 
\renewcommand\arraystretch{1.1} 
\begin{tabular}{l||cccc|cccc|cccc|cccc}
\hline\thickhline

\multicolumn{17}{l}{\textcolor{gray!90}{\textit{Binary Classification Tasks}}} \\ 

\rowcolor{gray20}
 & \multicolumn{4}{c|}{\textbf{CH}} & \multicolumn{4}{c|}{\textbf{KC1}} & \multicolumn{4}{c|}{\textbf{Churn}} & \multicolumn{4}{c}{\textbf{Employee}} \\
\cline{2-17} 

\rowcolor{gray20}
\multirow{-2}{*}{\textbf{Methods}} 
 & Tabnet & FTT & Saint & MLP & Tabnet & FTT & Saint & MLP & Tabnet & FTT & Saint & MLP & Tabnet & FTT & Saint & MLP \\
\hline\hline

Clean (Baseline) & 86.91 & 87.62 & 87.91 & 86.70 & 84.59 & 86.01 & 85.54 & 85.07 & 90.20 & 95.20 & 95.00 & 91.60 & 82.16 & 83.99 & 85.49 & 83.35 \\
\hdashline

\rowcolor{gray10}

CaFA~\cite{ben2024cafa} & 67.34 & 68.45 & 66.24 & 65.34 & 70.43 & 57.94 & 64.84 & 69.15 & 73.50 & 73.37 & 77.75 & 83.12 & 67.93 & 70.63 & 62.95 & 66.32 \\

TAP~\cite{tap}       & 64.95 & 70.76 & 69.43 & 73.76 & 72.15 & 63.42 & 68.01 & 73.40 & 72.50 & 75.25 & 83.25 & 85.10 & 69.49 & 67.52 & 65.56 & 68.52 \\

\rowcolor{gray10}
EM~\cite{em}         & 69.43 & 73.52 & 70.28 & 66.58 & 69.43 & 61.63 & 68.60 & 66.53 & 66.87 & 71.75 & 75.13 & 76.12 & 63.14 & 65.25 & 68.69 & 70.03 \\

RobustEM~\cite{rem}  & 63.34 & 71.26 & 63.40 & 70.73 & 60.85 & 54.61 & 62.04 & 59.65 & 63.50 & 66.50 & 58.87 & 80.87 & 59.87 & 59.72 & 64.66 & 60.03 \\

\rowcolor{gray10}
SEP~\cite{sep}        & 62.70 & 61.54 & 64.65 & 67.56 & 59.24 & 56.26 & 58.70 & 59.65 & 62.46 & 65.12 & 65.87 & 73.37 & 57.16 & 59.84 & 60.62 & 63.41  \\

PUE~\cite{wang2025provably} & 59.92 & 58.46 & 61.79 & 63.00 & 58.65 & 55.55 & 59.10 & 58.76 & 61.50 & 64.12 & 60.87 & 75.62 & 58.68 & 64.63 & 58.54 & 59.59 \\

\hline

\rowcolor{highlightcyan}
\textbf{UTOPIA (Ours)} & \textbf{47.51} & \textbf{45.36} & \textbf{49.75} & \textbf{50.17} & \textbf{45.79} & \textbf{40.37} & \textbf{46.54} & \textbf{41.89} & \textbf{42.15} & \textbf{52.11} & \textbf{45.25} & \textbf{61.02} & \textbf{47.44} & \textbf{44.36} & \textbf{45.86} & \textbf{49.26} \\
\hline\hline

\multicolumn{17}{l}{\textcolor{gray!90}{\textit{Multi-class Classification Tasks}}} \\ 

\rowcolor{gray20}
 & \multicolumn{4}{c|}{\textbf{JV}} & \multicolumn{4}{c|}{\textbf{IF}} & \multicolumn{4}{c|}{\textbf{Dry Bean}} & \multicolumn{4}{c}{\textbf{EOL}} \\
\cline{2-17}

\rowcolor{gray20}
\multirow{-2}{*}{\textbf{Methods}} 
 & Tabnet & FTT & Saint & MLP & Tabnet & FTT & Saint & MLP & Tabnet & FTT & Saint & MLP & Tabnet & FTT & Saint & MLP \\
\hline\hline

Clean (Baseline) & 94.58 & 96.88 & 98.49 & 97.19 & 72.43 & 79.74 & 80.27 & 79.72 & 91.77 & 92.36 & 92.17 & 92.10 & 88.66 & 95.98 & 94.56 & 86.28 \\
\hdashline

\rowcolor{gray10}

CaFA~\cite{ben2024cafa} & 49.30 & 51.72 & 57.19 & 53.51 & 62.76 & 67.43 & 72.61 & 68.13 & 43.98 & 42.03 & 45.98 & 51.54 & 44.55 & 39.71 & 39.85 & 43.38 \\

TAP~\cite{tap}       & 41.33 & 55.36 & 51.61 & 44.69 & 51.38 & 71.37 & 73.52 & 72.43 & 46.16 & 41.54 & 45.15 & 55.15 & 52.63 & 49.08 & 44.52 & 51.45 \\

\rowcolor{gray10}
EM~\cite{em}         & 43.07 & 46.78 & 52.62 & 38.02 & 51.64 & 65.41 & 75.19 & 73.67 & 35.33 & 34.64 & 40.61 & 47.36 & 40.79 & 34.88 & 35.47 & 35.71 \\

RobustEM~\cite{rem}  & 31.85 & 36.05 & 47.27 & 34.19 & 50.38 & 59.09 & 64.59 & 65.84 & 33.35 & 31.74 & 35.03 & 43.65 & 35.17 & 26.89 & 31.33 & 29.89 \\

\rowcolor{gray10}
SEP~\cite{sep}       & 34.66 & 40.37 & 50.18 & 37.24 & 51.09 & 50.87 & 58.12 & 63.72 & 31.64 & 28.91 & 35.73 & 42.26 & 28.01 & 25.96 & 30.68 & 31.38 \\

PUE~\cite{wang2025provably}  & 31.60 & 36.06 & 48.85 & 34.11 & 44.21 & 54.43 & 57.31 & 58.87 & 34.12 & 30.32 & 38.61 & 35.35 & 30.45 & 27.26 & 31.57 & 31.27 \\

\hline

\rowcolor{highlightcyan}
\textbf{UTOPIA (Ours)} & \textbf{15.24} & \textbf{21.32} & \textbf{31.05} & \textbf{21.39} & \textbf{31.47} & \textbf{34.28} & \textbf{41.48} & \textbf{42.39} & \textbf{17.31} & \textbf{13.65} & \textbf{18.98} & \textbf{25.36} & \textbf{15.38} & \textbf{14.20} & \textbf{17.75} & \textbf{18.34} \\
\hline

\end{tabular}}}

\label{tab:styled_final}
\vspace{-5mm}
\end{table*}


\textbf{Models and Datasets.}
In our experiments, we employ 4 representative architectures: FT-Transformer (FTT) \cite{dai2025ft}, SAINT \cite{somepalli2021saint}, MLP, and TabNet \cite{arik2021tabnet}. To evaluate the transferability, we use 4 models as the surrogate model to generate UE, while other 10 models including AutoInt \cite{auroint}, ExcelFormer \cite{ExcelFormer}, RealMLP \cite{Realmlp}, ResNet \cite{resnet}, SNN \cite{snn}, TabM \cite{Tabm}, TabR \cite{TabR}, TabTransformer \cite{tabtransformer}, Trompt \cite{Trompt}, and XGBoost \cite{chen2016xgboost} are utilized as the target models for cross-architecture evaluation. And we conduct experiments on eight tabular datasets: California Housing Classification (CH) \cite{California_Housing_Classification}, Churn \cite{churn_openml_dataset} , KC1 \cite{kc1}, Employee \cite{employee_dataset}, Dry Bean \cite{dry_bean_602}, Internet Firewall (IF) \cite{internet_firewall_data_542}, Japanese Vowels (JV) \cite{japanese_vowels_128} and Estimation of Obesity Levels (EOL) \cite{estimation_of_obesity_levels}, with more details in Appendix~\ref{appendix:dataset}.

\textbf{Implementation Details.}
Following \citet{em,yu2025mtl}, we adopt a full protection strategy where the rate of protection data is set to 100\%. Following \citet{ben2024cafa}, the perturbation budget $\epsilon$ is restricted to $0.03$ under the $\ell_\infty$ norm to ensure the imperceptibility. We employ PGD~\cite{pgd} with iterations $N=20$ to optimize the perturbations. We set the amplification factor to $\lambda = 5.0$ and correlation threshold is $\tau = 0.5$. Additionally, the momentum coefficient $\mu$ is set to $1.0$. For the training of all models, we set epochs to $30$, and use AdamW optimizer with $1.0 \times 10^{-3}$ learning rate and cosine annealing LR scheduler ($T=30$).

\textbf{Evaluation Metrics.}
For all classification tasks in our experiments, we report the accuracy (\%) on the clean test set. In the context of UE, such accuracy is an indicator of the protection strength: a lower accuracy signifies a superior unlearnable effect, showing that the model has failed to extract generalizable features from the protected training data.

\textbf{Baselines.}
We select 5 UE baseline including: TAP~\cite{tap}, EM~\cite{em} , RoustEM~\cite{rem}, SEP~\cite{sep} and PUE~\cite{wang2025provably}.
In addition, we report results for the test-time adversarial attack CaFA~\cite{ben2024cafa} as a reference.

\subsection{Experimental Results}


\textbf{Results on Binary Classification Tasks.} 
Table~\ref{tab:styled_final} shows that our UTOPIA consistently achieves the most performance degradation across all binary classification datasets, often pushing the model accuracy toward or even below the random guessing threshold (50\%). Specifically, on the \textit{KC1} dataset, UTOPIA reduces the accuracy of the MLP model to 41.89\%, whereas the strongest baseline, PUE, only manages to lower it to 58.76\%. A key finding from the binary tasks is the architectural robustness of UTOPIA. While existing methods like SEP and PUE show fluctuating effectiveness, UTOPIA maintains a stable unlearnable effect across all four architectures, typically outperforming the best baseline. This suggests that UTOPIA successfully captures core tabular features that are indispensable for binary decision boundaries, effectively protecting data from unauthorized learning process.

\begin{table}[t]
\centering
\captionsetup{font=small}
\caption{\textbf{Impact of Perturbation Budget ($\epsilon$)}: Accuracy (\%) across datasets and models.}
\vspace{-2mm}
\scriptsize{
\resizebox{\columnwidth}{!}{
\setlength\tabcolsep{1pt} 
\renewcommand\arraystretch{1.2} 
\begin{tabular}{l||cc|cc|cc|cc}
\hline\thickhline
\rowcolor{gray20}
\textbf{Budget} $\rightarrow$ & \multicolumn{2}{c|}{\bm{$\epsilon=0.007$}} & \multicolumn{2}{c|}{\bm{$\epsilon=0.009$}} & \multicolumn{2}{c|}{\bm{$\epsilon=0.01$}} & \multicolumn{2}{c}{\bm{$\epsilon=0.03$}} \\
\cline{2-9} 
\rowcolor{gray20}
\textbf{Model} $\downarrow$, \textbf{Dataset} $\rightarrow$ & JV& IF& JV & IF & JV & IF & JV & IF \\ 
\hline\hline

Tabnet & 38.71 & 59.69 & \textbf{30.95} & \textbf{53.26} & \textbf{27.62} & \textbf{45.90} & \textbf{15.24} & \textbf{31.47} \\

\rowcolor{gray10}
FTT    & \textbf{37.10} & \textbf{57.37} & 35.65 & 55.76 & 28.21 & 48.78 & 21.32 & 34.28  \\

Saint  & 55.50 & 67.12 & 50.90 & 59.60 & 46.76 & 55.93 & 31.05 & 41.48  \\

\rowcolor{gray10}
MLP    & 42.27 & 60.47 & 40.32 & 57.64 & 37.64 & 57.43 & 21.39 & 42.39  \\

\hline
\end{tabular}
}}
\label{tab:perturbation_budget}
\vspace{-4mm}
\end{table}

\textbf{Results on Multi-class Classifications Tasks.} 
In the multi-class settings as shown in Table~\ref{tab:styled_final},UTOPIA frequently reaching near-random performance levels. On the \textit{Dry Bean} dataset, UTOPIA achieves a remarkable low of 13.65\% accuracy on the FTT model, compared to 28.91\% achieved by SEP. We observe a recurring phenomenon where transformer-based models (Saint and FTT), which are typically robust to noise, still succumb to UTOPIA's perturbations. Notably, on \textit{JV} dataset, the Clean baseline reaches 98.49\% accuracy on Saint, and UTOPIA successfully suppresses this to 31.05\%, a significantly lower result than RobustEM's 47.27\%. Our findings indicate that as the number of classes increases, UTOPIA exploits the increased complexity of the label space more effectively than baselines. The consistent results across \textit{EOL} and \textit{JV} dataset show that UTOPIA provides a universal unlearnable solution that is not sensitive to the specific number of categories or the inherent dimensionality of the tabular data. We also conduct evaluations on computational cost in Appendix~\ref{appendix:generation_time}.

\begin{table}[t]
\centering
\captionsetup{font=small}
\caption{\textbf{Transferability}: Accuracy (\%) across different model architectures and datasets.}
\vspace{-2mm}
\scriptsize{
\resizebox{1.0\columnwidth}{!}{
\setlength\tabcolsep{1.0pt}
\renewcommand\arraystretch{1.05}
\begin{tabular}{l|l|| cccccccccc}
\hline\thickhline
\rowcolor{gray20}
\textbf{Dataset} & \textbf{From $\downarrow$} & \multicolumn{10}{c}{\textbf{Target Model}} \\ 
\cline{3-12}
\rowcolor{gray20}
$\downarrow$ &  \textbf{To $\rightarrow$}& \textbf{AI} & \textbf{EF} & \textbf{RM} & \textbf{RN} & \textbf{SN} & \textbf{TM} & \textbf{TR} & \textbf{TT} & \textbf{TP} & \textbf{XG} \\ 
\hline\hline

& FTT    & \textbf{54.92} & \textbf{53.83} & \textbf{55.98}          & \textbf{52.17} & 52.52          & 54.78          & 56.19          & 59.34          & 62.10 & 61.88 \\
\rowcolor{gray10}
\cellcolor{white} & MLP    & 59.77          & 60.65          & 56.95 & 57.19          & 58.27          & 59.09          & \textbf{55.80} & 63.76          & 60.16 & \textbf{57.28} \\
& SAINT  & 56.22          & 60.22          & 59.15          & 62.80          & 56.96          & 58.71          & 61.99          & \textbf{53.98} & 66.08          & 60.25 \\
\rowcolor{gray10}
\cellcolor{white} \multirow{-4}{*}{\shortstack[l]{CH}} & TabNet & 57.43          & 56.17          & 53.27          & 58.68          & \textbf{48.74} & \textbf{43.94} & 57.12          & 57.55 & \textbf{56.02} & 59.88 \\

\hline

& FTT    & 24.86          & 26.49 & 29.04          & 27.79         & 31.89          & 27.71          & 26.68          & 32.71          & 28.71          & \textbf{26.95} \\
\rowcolor{gray10}
\cellcolor{white} & MLP    & 23.38          & 23.60          & 24.66          & 27.68          & 26.70          & 28.33          & \textbf{23.46} & \textbf{22.67} & 30.68          & 34.55 \\
& SAINT  & \textbf{21.60}          & \textbf{22.74} & \textbf{20.42} & \textbf{24.14}          & 33.89          & \textbf{20.06} & 29.17          & 26.66          & \textbf{25.98}          & 27.31 \\
\rowcolor{gray10}
\cellcolor{white} \multirow{-4}{*}{\shortstack[l]{EOL}} & TabNet & 28.27          & 25.31          & 24.55          & 26.63 & \textbf{23.38}          & 24.18          & 27.13          & 29.08          & 32.75          & 29.97 \\

\hline

& FTT    & 31.88 & 33.36          & \textbf{32.30} & \textbf{31.68} & 35.57          & 37.15          & 29.57          & 37.69          & 31.98          & 37.51 \\
\rowcolor{gray10}
\cellcolor{white} & MLP    & 34.93          & 31.01          & 36.15          & 36.98          & 32.95          & 34.46          & 38.94          & 41.43         & 32.33          & 35.04 \\
& SAINT  & 42.03          & 48.06          & 41.75          & 40.96          & 39.72          & 44.13          & 45.97          & 36.46          & 46.34          & 37.76 \\
\rowcolor{gray10}
\cellcolor{white} \multirow{-4}{*}{\shortstack[l]{JV}} & TabNet & \textbf{28.08}          & \textbf{22.41} & 34.95          & 28.83          & \textbf{25.42} & \textbf{28.03} & \textbf{24.46} & \textbf{21.36} & \textbf{24.45} & \textbf{29.84} \\

\hline
\end{tabular}
}}
\vspace{-2mm}
\begin{flushleft}
\scriptsize \textbf{Target Models}: \textbf{AI}: AutoInt, \textbf{EF}: ExcelFormer, \textbf{RM}: RealMLP, \textbf{RN}: ResNet, \textbf{SN}: SNN, \textbf{TM}: TabM, \textbf{TR}: TabR, \textbf{TT}: TabTransformer, \textbf{TP}: Trompt, \textbf{XG}: XGBoost.
\end{flushleft}
\label{tab:transferability}
\vspace{-3mm}
\end{table}

\begin{table}[t]
\centering
\captionsetup{font=small}
\caption{\textbf{Ablation study}: Accuracy(\%) across datasets and models:}
\label{tab:ablation}
\vspace{-2mm}
\scriptsize{
\resizebox{1.0\columnwidth}{!}{
\setlength\tabcolsep{2pt} 
\renewcommand\arraystretch{1.05} 
\begin{tabular}{l||cccc|cccc}
\hline\thickhline
\rowcolor{gray20}
\textbf{Dataset} $\rightarrow$ & \multicolumn{4}{c|}{\textbf{JV}} & \multicolumn{4}{c}{\textbf{IF}} \\
\cline{2-9} 
\rowcolor{gray20}
\textbf{Model} $\rightarrow$ & Tabnet & FTT & Saint & MLP & Tabnet & FTT & Saint & MLP \\ 
\hline\hline

\ding{172} w/o Decoupling & 24.03 & 29.27 & 41.21 & 28.48 & 45.77 & 55.21 & 51.22 & 47.61 \\

\rowcolor{gray10}

\ding{173} w/o Suppression & 30.30 & 32.08 & 50.94 & 35.38 & 56.86 & 59.25 & 56.63 & 63.61 \\

\ding{174} w/o Injection & 32.67 & 35.89 & 45.33 & 31.17 & 50.88 & 63.14 & 48.86 & 55.23 \\

\rowcolor{gray10}
\ding{175} w/o Influence & 26.35 & 27.33 & 37.32 & 25.21 & 44.19 & 50.91 & 61.11 & 46.58 \\

\hline
\rowcolor{highlightcyan}
\textbf{Our UTOPIA} & \textbf{15.24} & \textbf{21.32} & \textbf{31.05} & \textbf{21.39} & \textbf{31.47} & \textbf{34.28} & \textbf{41.48} & \textbf{42.39} \\ 
\hline
\end{tabular}
}}
\vspace{-2mm}
\end{table}

\begin{figure*}[t] 
    \centering
    \includegraphics[width=\textwidth]{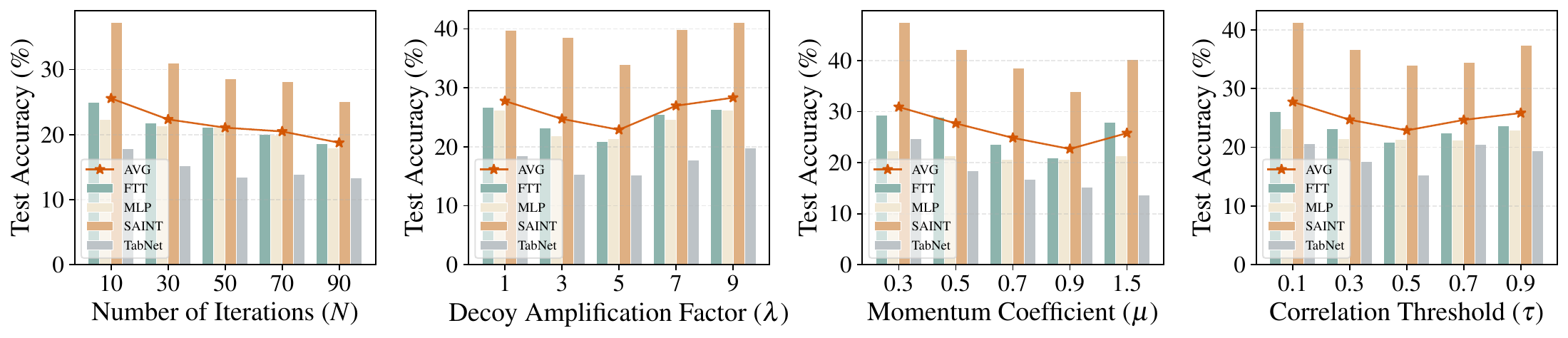}
    \vspace{-7mm}\caption{\textbf{Hyperparameter sensitivity analysis on JV dataset:} The orange line denotes the average performance across all models.}
    \label{fig:hyperparameters}
    \vspace{-1mm}
\end{figure*}

\begin{table*}[t]\small
\centering
\captionsetup{font=small}
\caption{\textbf{Comparison of different Defense Results (Accuracy \%):} The table evaluates various defense mechanisms across three datasets using four different model architectures. The bold value denotes the highest under defenses.}
\vspace{-2mm}
\scriptsize{
\resizebox{\linewidth}{!}{
\setlength\tabcolsep{5pt} 
\renewcommand\arraystretch{1.1} 
\begin{tabular}{l||cccc|cccc|cccc}
\hline\thickhline
\rowcolor{gray20}
 & \multicolumn{4}{c|}{\textbf{KC1}} & \multicolumn{4}{c|}{\textbf{JV}} & \multicolumn{4}{c}{\textbf{Churn}} \\
\cline{2-13} 

\rowcolor{gray20}
\multirow{-2}{*}{\textbf{Defenses} $\downarrow$} 
 & Tabnet & FTT & Saint & MLP & Tabnet & FTT & Saint & MLP & Tabnet & FTT & Saint & MLP \\
\hline\hline

\textbf{No Defense} & 45.79 & 40.37 & 46.54 & 41.89 & 15.24 & 21.32 & 31.05 & 21.39 & 42.15 & 52.11 & 45.25 & 61.02 \\
\hdashline

\rowcolor{gray10}
Feature Masking    & 58.28 & 54.43 & 53.55 & 52.36 & 17.87 & 21.39 & 39.08 & 22.02 & 50.75 & 56.50 & 52.12 & 68.37 \\

Feature Squeezing  & 49.40 & 56.21 & 57.10 & 50.00 & 20.32 & 23.77 & 46.80 & 23.14 & 55.87 & 59.37 & 58.67 & 70.25 \\

\rowcolor{gray10}
Gaussian Noise     & 53.25 & 52.66 & 47.04 & 46.15 & 22.45 & 25.03 & 47.80 & 25.65 & 63.25 & 63.75 & 50.37 & 69.50 \\

Label Smoothing    & 56.80 & 49.70 & 58.87 & 53.84 & 18.69 & \textbf{25.53} & 49.87 & 23.58 & 47.75 & 58.00 & 55.12 & \textbf{70.53} \\

\rowcolor{gray10}
Mixup              & \textbf{59.46} & \textbf{62.42} & \textbf{57.98} & \textbf{58.57} & \textbf{26.41} & 25.40 & \textbf{54.20} & \textbf{32.62} & 61.50 & \textbf{65.62} & \textbf{61.12} & 72.37 \\

Quantization       & 54.43 & 42.01 & 51.47 & 44.67 & 23.83 & 24.84 & 46.80 & 25.47 & \textbf{63.50} & 53.37 & 57.62 & 66.75 \\

\rowcolor{gray10}
Swap Noise         & 48.81 & 47.33 & 48.52 & 42.60 & 20.95 & 23.32 & 42.28 & 24.02 & 62.12 & 57.62 & 56.75 & 65.51 \\
\hline

\end{tabular}}}
\label{tab:defense_results}
\vspace{-3mm}
\end{table*}

\textbf{Impact of Perturbation Budget.}
We evaluate the impact of varying perturbation budgets by testing the models under $\epsilon = 0.007, 0.009, 0.01,$ and $0.03$. As shown in Table~\ref{tab:perturbation_budget}, the performance across all models generally decreases as the perturbation budget increases. Tabnet consistently achieves the best performance across most budget levels, while architectures like Saint show a more pronounced sensitivity to larger perturbation magnitudes.

\textbf{Transferability.}
We assess the transferability of our UE in a black-box setting, where the attacker employs a model architecture different from the surrogate model used to generate UE. We utilize the four models from our main experiment as surrogates to generate UE. The results are reported in Table~\ref{tab:transferability}. Our method demonstrates strong transferability across all datasets, successfully degrading the accuracy of the 10 distinct target models.

\textbf{Ablation Studies.}
The results are presented in Table \ref{tab:ablation}. Compared to the full UTOPIA, \ding{172} demonstrates that disabling subspace partitioning leads to a consistent increase in accuracy, confirming the necessity of decoupling features. The most significant performance drop occurs in \ding{173}, which highlights that semantic suppression via gradient ascent is a critical component for successfully obfuscating the model's decision making process. While \ding{174} shows that shortcut injection via gradient descent further enhances attack effectiveness, the results in \ding{175} emphasize the importance of our influence guidance strategy; replacing saliency based assignment with random partitioning results in a clear loss of performance across all models. Overall, the full UTOPIA method achieves the lowest accuracy across both datasets, proving that the synergy between decoupling, suppression, and guided injection is essential for a robust attack.

\textbf{Hyper-Parameter Analysis.}
We analyze four hyperparameters $N$, $\alpha$, $\mu$, and $\tau$ as shown in Fig.~\ref{fig:hyperparameters}. 
The influence of the number of iterations $N$ demonstrates a consistent trend where increasing iterations improves the attack performance. For the amplification factor $\alpha$, the optimal performance is achieved at 5. We attribute this to the need for a balanced amplification that disturbs the learning process without dominating the optimization, deviations from this value result in diminished effectiveness. Similar to the observations for the other parameters, moderate variations in $\mu$ and $\tau$ do not drastically alter the effect.

\textbf{Robustness.}
To evaluate the resistance of our proposed UE, we employ various data augmentation and purification techniques as defense mechanisms, attempting to neutralize the injected perturbations and restore model training. We test seven distinct methods, including Feature Masking, Feature Squeezing, Gaussian Noise, Label Smoothing, and Mixup, Quantization and Swap Noise across multiple datasets and architectures. Results in Table~\ref{tab:defense_results} reveal that while applying these augmentation techniques can yield a marginal improvement in accuracy, the overall performance uplift remains limited and fails to recover the semantic utility of the data. Among the evaluated defenses, Mixup emerges as the relatively most effective strategy, consistently achieving higher accuracy compared to other methods.

\vspace{-1mm}
\section{Conclusion}
\vspace{-1mm}
This paper presents UTOPIA, a framework tailored for generating unlearnable examples on tabular data. Instead of directly applying global-level perturbations, we employ influence-guided subspace decoupling to partition features into conflicting optimization trajectories. Through differential gradient steering, UTOPIA suppresses semantic learnability in high-saliency subspaces while simultaneously injecting dominant shortcuts into redundant ones. Furthermore, by applying constraint-aware optimization, we ensure that perturbations strictly adhere to the structural validity of mixed-type domains. Extensive experiments demonstrate UTOPIA's effectiveness and robust transferability across diverse architectures in safeguarding tabular data privacy.








\section*{Impact Statement}
This paper presents a method for protecting sensitive tabular datasets (e.g., finance, healthcare, and user-behavior logs) from unauthorized model training by releasing constraint-preserving unlearnable data. The intended positive impact is stronger data stewardship: it can reduce incentives for large-scale scraping, offer a practical layer of protection when access control is imperfect, and help organizations share data for approved analysis while limiting illicit reuse. At the same time, similar techniques could be misused to sabotage benign training pipelines, poison community benchmarks, or create disputes about data integrity, potentially increasing costs and eroding trust in shared resources. To mitigate these risks, the method should be positioned as a defensive tool for legitimate data owners, paired with clear dataset labeling, controlled distribution, and auditing. Continued work on detection, provenance tracking, and robust training protocols can further limit misuse while preserving the benefits.


\bibliography{example_paper}
\bibliographystyle{icml2026}

\newpage
\appendix
\onecolumn
\section{Proofs.}

In this section, we provide the proofs of our theoretical results in Section \ref{sec:theory}.

\subsection{Proof for Theorem  \ref{thm:analytical_suppression}}
\label{prf:analytical_suppression}

\begin{proof}

The fixed point $\boldsymbol{\alpha}^*$ of the dual dynamics satisfies a nonlinear system involving the feature correlation matrix $\mathbf{K} = \mathbf{X}\mathbf{X}^\top$. In the large-sample limit, $\mathbf{K}$ concentrates around the population covariance
\[
   \mathbf{\Sigma}_x \coloneqq \mathbb{E}[\mathbf{x}\mathbf{x}^\top]
   = \mathbf{\Sigma}_c \oplus \mathbf{\Sigma}_p,
\]
which is block-diagonal with respect to the $(\mathcal{S}_c,\mathcal{S}_p)$ decomposition. 
Following the analysis of \citet{pezeshki2021gradient}, the equilibrium primal weights along an eigendirection with eigenvalue $\lambda$ can be characterized as:
\begin{equation}
   w^*(\lambda) 
   \;=\; 
   \mathcal{A}(\lambda,\gamma)\,\xi,
\end{equation}
where $\xi$ encodes the label alignment (corresponding to the dual variable magnitude) and $\mathcal{A}(\lambda,\gamma)$ is a spectral response function expressible via the Lambert $W$ function.

Crucially, the clean and poison components compete to fit the labels. The effective gradient signal available to the clean subspace is suppressed by the presence of the dominant poison subspace. Specifically, the response function for the clean component, $\mathcal{A}(\lambda_c, \gamma)$, is attenuated by a factor involving the total signal strength $\lambda_c + \lambda_p$. Analytical comparison yields the scaling:
\begin{equation}
   \|\mathbf{w}_c^*\|_2 
   \;\le\; 
   \frac{\xi}{1 + \lambda_p / \lambda_c + \gamma / \lambda_c}\, g(\lambda_c)
   \;\le\;
   \frac{\xi}{\kappa}\, g(\lambda_c),
\end{equation}
for a strictly increasing function $g(\lambda_c)$ that depends only on the clean spectrum.
Parameterizing $g(\lambda_c) = W(\psi(\lambda_c))$ for convenience, we obtain the bound in \eqref{eq:lambert_bound}. The factor $1/\kappa$ explicitly captures the spectral dominance: as the poison features become stronger ($\kappa \to \infty$), the model relies on them almost exclusively, driving the clean weights to zero.
\end{proof}

\subsection{Proof for Theorem  \ref{thm:certified_bound}}
\label{prf:certified_bound}

\begin{proof}
For a clean sample $\mathbf{x} \in \mathcal{S}_c$ (so that $\mathbf{x}_p = 0$), we have $\mathbf{w}_p^{*\top}\mathbf{x} = 0$ and hence
\begin{equation}
   M(\mathbf{x}) 
   \;=\; 
   y\, (\mathbf{w}^* + \boldsymbol{\delta})^\top \mathbf{x}
   \;=\;
   y\, (\mathbf{w}_c^* + \boldsymbol{\delta})^\top \mathbf{x},
   \ \boldsymbol{\delta} \sim \mathcal{N}(\mathbf{0}, \sigma^2 \mathbf{I}).
\end{equation}
Conditioned on $\mathbf{x}$, the random variable $M(\mathbf{x})$ is Gaussian with mean $y\,\mathbf{w}_c^{*\top}\mathbf{x}$ and variance $\sigma^2 \|\mathbf{x}\|_2^2$. Hence the probability of correct classification can be written exactly as
\begin{equation}
   \mathbb{P}(\tilde{f}(\mathbf{x}) = y)
   \;=\;
   \Phi\!\left(
       \frac{y\,\mathbf{w}_c^{*\top}\mathbf{x}}{\sigma \|\mathbf{x}\|_2}
   \right).
\end{equation}
By the Cauchy--Schwarz inequality,
\begin{equation}
   \left|
       \frac{y\,\mathbf{w}_c^{*\top}\mathbf{x}}{\sigma \|\mathbf{x}\|_2}
   \right|
   \;\le\;
    \frac{\|\mathbf{w}_c^*\|_2}{\sigma}.
\end{equation}
Since $\Phi(\cdot)$ is monotonically increasing, we obtain the uniform upper bound
\begin{equation}
   \mathbb{P}(\tilde{f}(\mathbf{x}) = y)
  \;\le\;
   \Phi\!\left( \frac{\|\mathbf{w}_c^*\|_2}{\sigma} \right)
\end{equation}
for all clean $\mathbf{x}$. Taking expectation over $(\mathbf{x},y)\sim\mathcal{D}_{\text{clean}}$ yields
\begin{equation}
   \underset{(\mathbf{x}, y) \sim \mathcal{D}_{\text{clean}}}{\mathbb{E}} 
   \left[ \mathbb{P}(\tilde{f}(\mathbf{x}) = y) \right]
   \;\le\;
   \Phi\!\left( \frac{\|\mathbf{w}_c^*\|_2}{\sigma} \right).
\end{equation}
Applying Theorem~\ref{thm:analytical_suppression}, there exists a constant $\mathcal{C}>0$ such that
\begin{equation}
   \|\mathbf{w}_c^*\|_2 
   \;\le\; 
   \frac{\mathcal{C}}{\kappa}\, W\!\bigl(\psi(\lambda_c)\bigr),
\end{equation}
which gives the desired bound
\begin{equation}
   \underset{(\mathbf{x}, y) \sim \mathcal{D}_{\text{clean}}}{\mathbb{E}} 
   \left[ \mathbb{P}(\tilde{f}(\mathbf{x}) = y) \right] 
   \;\le\; 
   \Phi\left( 
       \frac{\mathcal{C}}{\sigma} \cdot 
       \frac{W(\psi(\lambda_c))}{\kappa}
   \right).
\end{equation}
For fixed $\lambda_c$ and $\sigma$, the quantity $W(\psi(\lambda_c))$ is a constant independent of $\kappa$, so the argument of $\Phi(\cdot)$ tends to $0$ as $\kappa \to \infty$, and the accuracy bound converges to
\begin{equation}
   \lim_{\kappa \to \infty}
   \Phi\left( 
       \frac{\mathcal{C}}{\sigma} \cdot 
       \frac{W(\psi(\lambda_c))}{\kappa}
   \right)
   \;=\;
   \Phi(0)
   \;=\;
   0.5.
\end{equation}
This certifies that, under sufficiently strong spectral dominance, the smoothed classifier cannot provably achieve accuracy better than random guessing on the clean distribution.
\end{proof}

\section{Dataset.}
\label{appendix:dataset}

In Table \ref{tab:generation_time}, we report the computational time (in seconds) required to generate unlearnable examples for each dataset and model configuration. These results provide an overview of the efficiency of our method.

\begin{table}[th]
\centering
\caption{Summary of Dataset Statistics and Characteristics: The table provides an overview of the eight datasets used in our experiments, including the number of classes, numerical ($N_{num}$) and categorical ($N_{cat}$) features, and the sizes of training, validation, and test sets.}
\vspace{-2mm}
\label{tab:dataset_stats}
\setlength{\tabcolsep}{2.0pt} 
\begin{tabular}{l c c c c c c}
\toprule
\textbf{Dataset} & \textbf{Class} & $\mathbf{N_{num}}$ & $\mathbf{N_{cat}}$ & \textbf{Train} & \textbf{Val} & \textbf{Test} \\
\midrule
California Housing~\cite{California_Housing_Classification} & 2 & 8 & 0 & 13,209 & 3,303 & 4,128 \\
KC1~\cite{kc1} & 2 & 21 & 0 & 1,349 & 338 & 422 \\
Churn~\cite{churn_openml_dataset} & 2 & 16 & 4 & 3,200 & 800 & 1,000 \\
Employee~\cite{employee_dataset} & 2 & 3 & 5 & 2,977 & 745 & 931 \\
Japanese Vowels~\cite{japanese_vowels_128}  & 9 & 14 & 0 & 6374 & 1594 & 1993 \\
Internet Firewall~\cite{internet_firewall_data_542} & 4 & 7 & 0 & 41,940 & 10,485 & 13,107 \\
Dry Bean~\cite{dry_bean_602} & 7 & 16 & 0 & 8,710 & 2,178 & 2,723 \\
Estimation of Obesity Levels\cite{estimation_of_obesity_levels}  & 7 & 8 & 8 & 1350 & 338 & 423 \\
\bottomrule
\end{tabular}%

\vspace{-4mm}
\end{table}

\section{Computation cost}
\label{appendix:generation_time}

Table \ref{tab:dataset_stats} provides a detailed summary of the statistics for the eight datasets used in our experiments, including feature types and class distributions. This comprehensive overview details the partitioning of training, validation, and test sets to ensure the reproducibility of our experimental setup.

\begin{table}[th]
\centering
\caption{Computational Efficiency of Unlearnable Example Generation: We report the generation time (Seconds) across eight datasets and four model architectures (TabNet, FTT, SAINT, and MLP) to evaluate the time cost of our method.}
\label{tab:generation_time}
\begin{tabular}{l c c c c}
\toprule
Dataset & TabNet & FTT & SAINT & MLP \\
\midrule

California Housing~\cite{California_Housing_Classification} & 826.17 & 470.50 & 1461.80 & 81.45 \\
KC1~\cite{kc1} & 96.82 & 52.30 & 189.56 & 8.38 \\
Churn~\cite{churn_openml_dataset} & 209.20 & 120.68 & 421.29 & 21.44 \\
Employee~\cite{employee_dataset} & 212.34 & 108.67 & 311.25 & 18.90 \\
Japanese Vowels~\cite{japanese_vowels_128} & 390.59 & 212.05 & 763.81 & 39.02 \\
Internet Firewall~\cite{internet_firewall_data_542} & 2645.81 & 1478.72 & 4542.02 & 241.35 \\
Dry Bean~\cite{dry_bean_602} & 547.44 & 314.99 & 1111.85 & 51.37 \\
Estimation of Obesity Levels~\cite{estimation_of_obesity_levels} & 96.98 & 55.87 & 162.84 & 9.14 \\

\bottomrule
\end{tabular}

\end{table}

\section{Feature Dependency}
\label{appendix:Feature_Dependency}
To investigate the feature dependency of the trained models, we conduct a dual-aspect analysis. The left panel displays the Gradient Saliency, which quantifies the sensitivity of the model's predictions to individual numerical features by calculating the mean absolute gradients. The right panel presents the results of a Feature Ablation Study, comparing the impact of removing features in descending (Top-K) versus ascending (Bottom-K) order of importance. The sharp decline in accuracy during Top-K ablation, contrasted with the relative stability during Bottom-K ablation, validates that the model relies heavily on a specific subset of influential features.

\begin{figure*}[ht]
    \centering
    \includegraphics[width=0.75\textwidth]{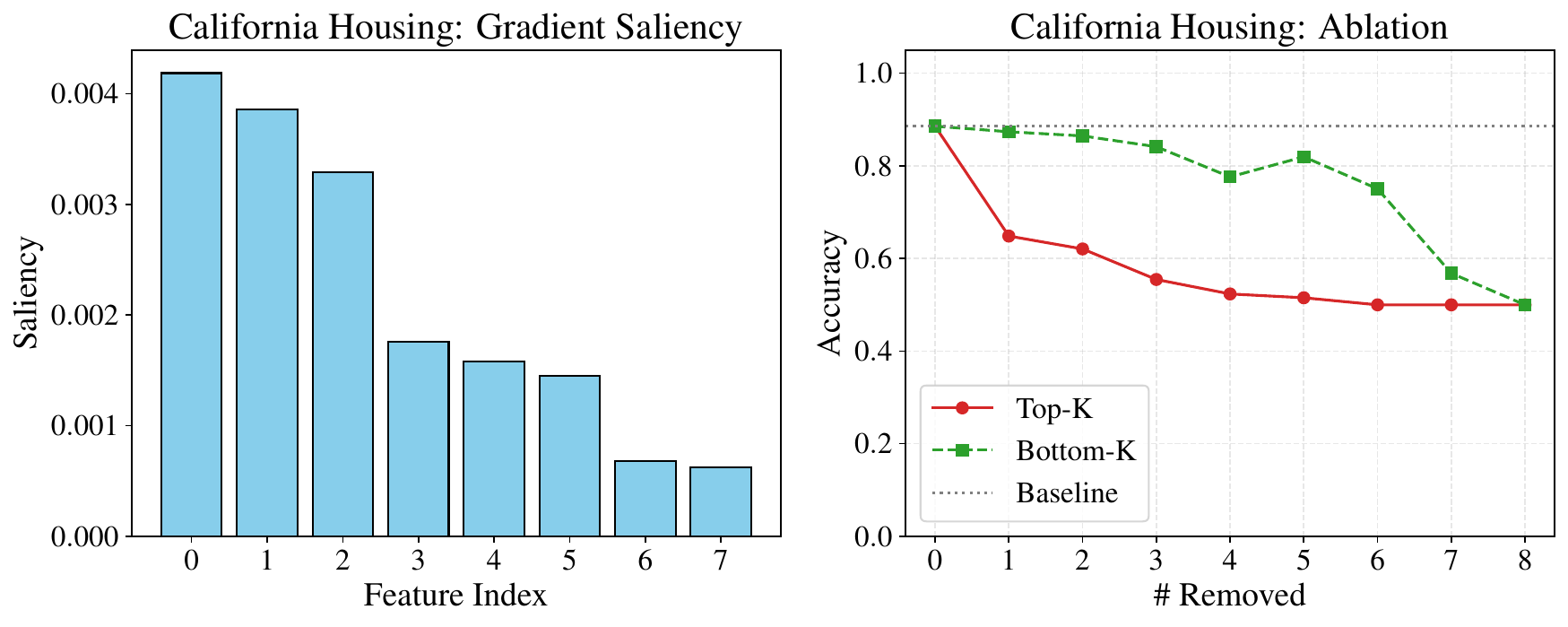}
    \vspace{-3mm}\caption{Feature gradient saliency analysis and accuracy when remove top-K and bottom-K features on California Housing dataset using FTT.}
    \label{fig:feature_ch}
    \vspace{-2mm}
\end{figure*}

\begin{figure*}[ht]
    \centering
    \includegraphics[width=0.75\textwidth]{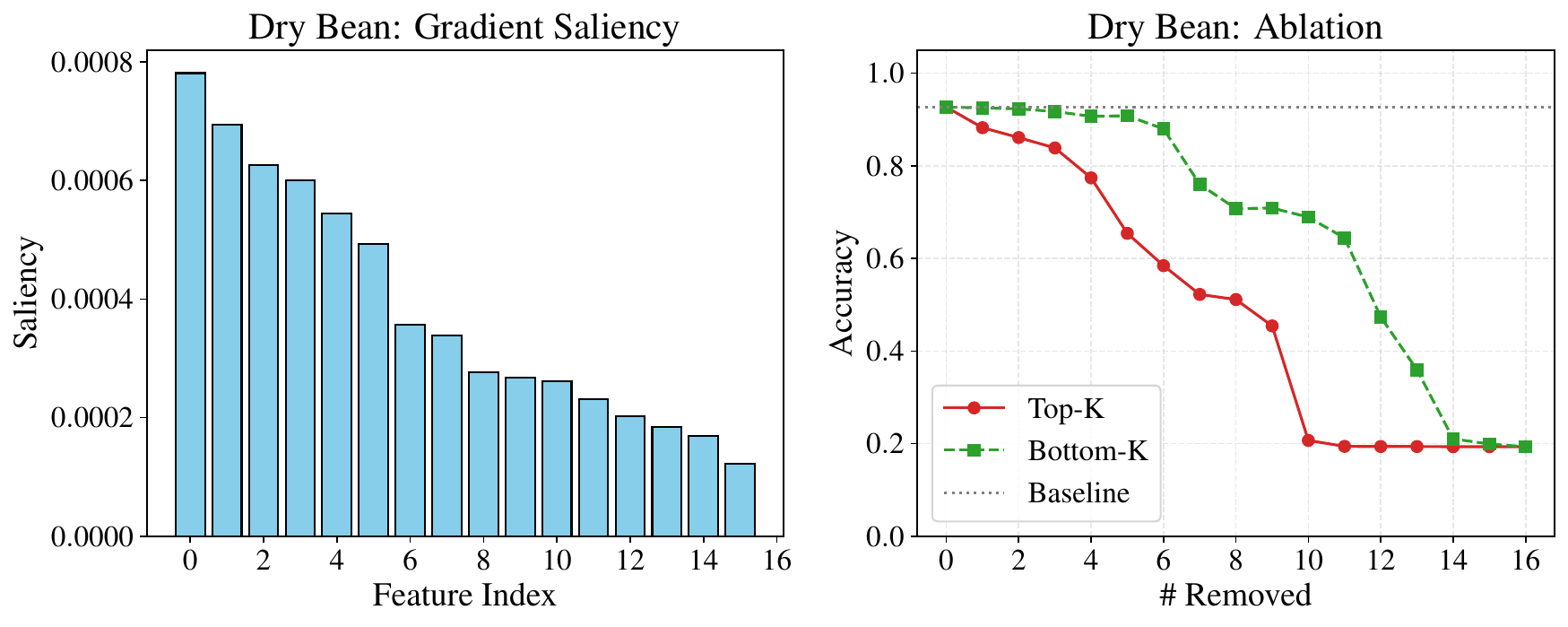}
    \vspace{-3mm}\caption{Feature gradient saliency analysis and accuracy when remove top-K and bottom-K features on Dry Bean dataset using FTT.}
    \label{fig:feature_db}
    \vspace{-2mm}
\end{figure*}

\begin{figure*}[ht]
    \centering
    \includegraphics[width=0.75\textwidth]{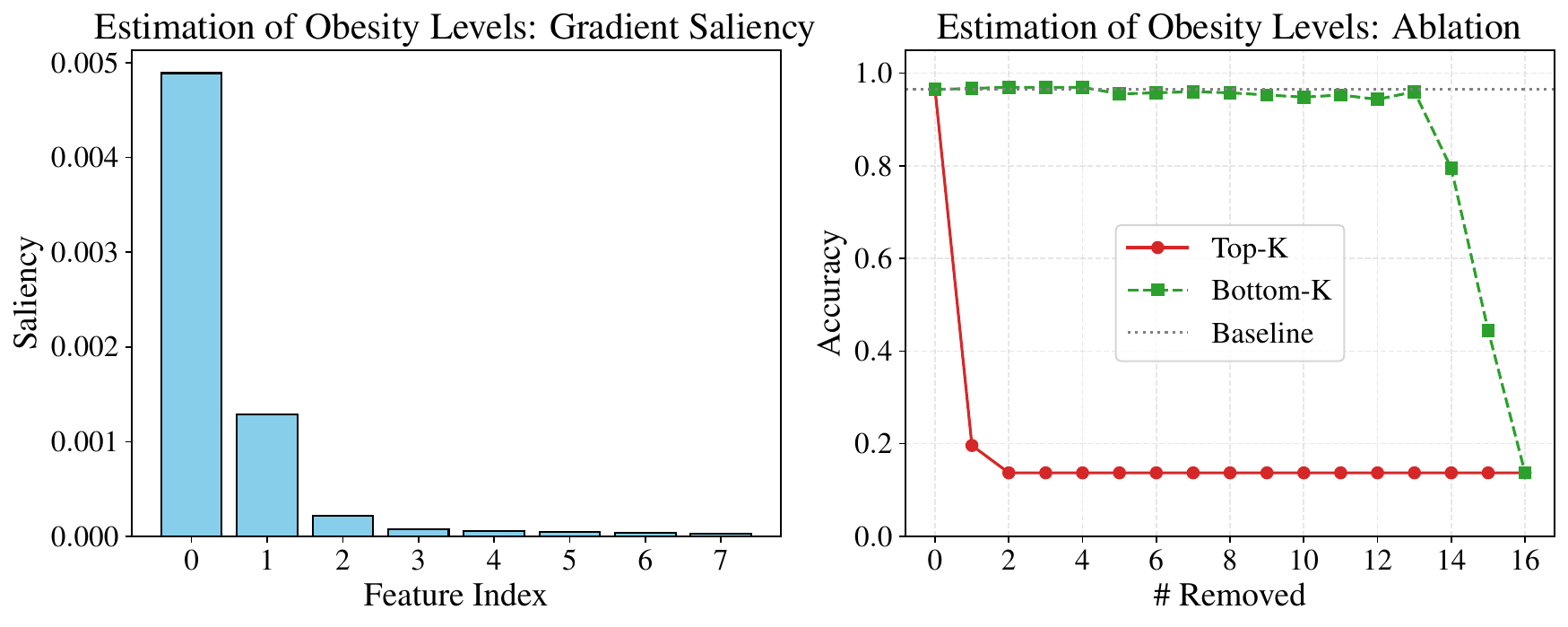}
    \vspace{-3mm}\caption{Feature gradient saliency analysis and accuracy when remove top-K and bottom-K features on Estimation of Obesity Levels dataset using FTT.}
    \label{fig:feature_eol}
    \vspace{-2mm}
\end{figure*}

\begin{figure*}[h]
    \centering
    \includegraphics[width=0.75\textwidth]{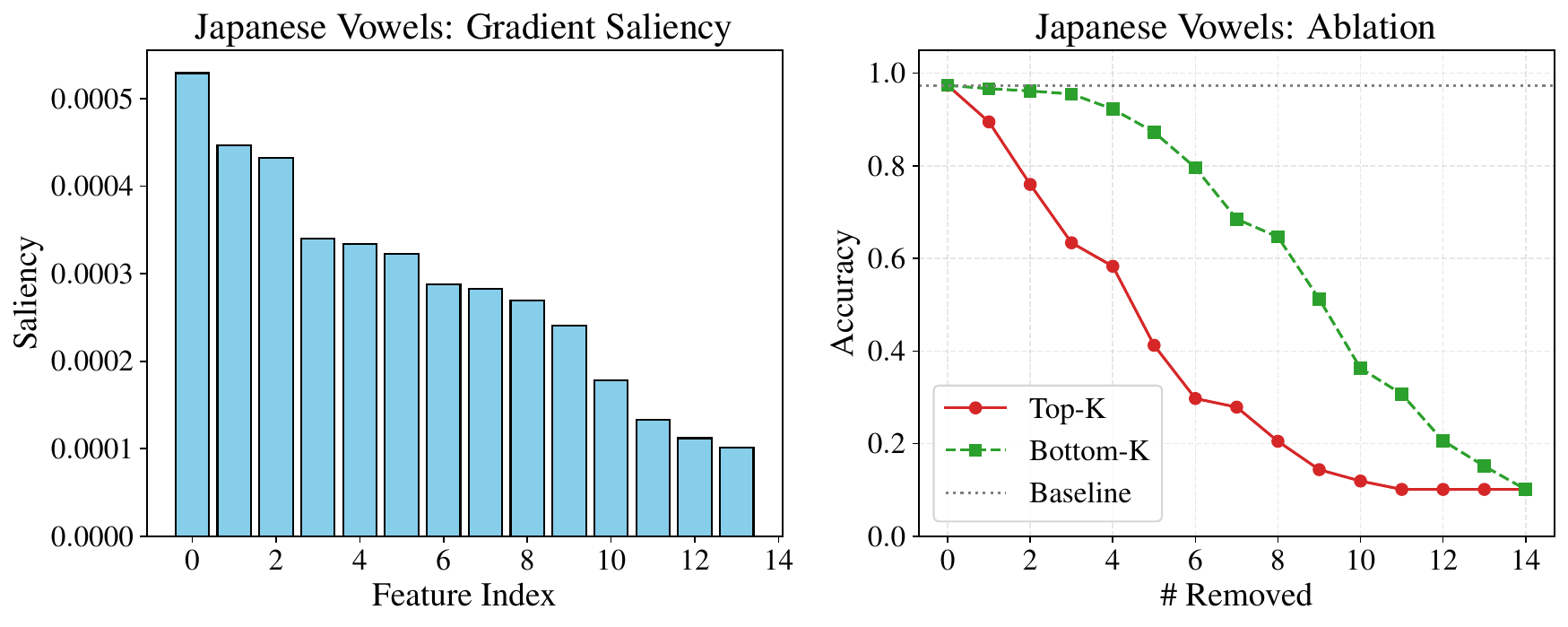}
    \vspace{-3mm}\caption{Feature gradient saliency analysis and accuracy when remove top-K and bottom-K features on Japanese Vowels dataset using FTT.}
    \label{fig:feature_jv}
    \vspace{-2mm}
\end{figure*}

\section{Limitations}
\begin{itemize}
    \item Due to page limits, we report the most representative settings. We use a single, consistent set of default hyperparameters across datasets. Light per-dataset tuning might yield slightly better numbers but is not essential to our conclusions.
    \item We use a \emph{binary} pair of modulation masks to enforce structural orthogonality (i.e., disjoint supports), which is a deliberate simplification for stability. A soft mask parameterization could be explored for extra flexibility.
\end{itemize}

\end{document}